\def\eqref#1{equation~\ref{#1}}
\def\1{\bm{1}}
\DeclareMathAlphabet{\mathsfit}{\encodingdefault}{\sfdefault}{m}{sl}
\SetMathAlphabet{\mathsfit}{bold}{\encodingdefault}{\sfdefault}{bx}{n}
\newtheorem{prop}{Proposition}
\newtheorem{lemma}{Lemma}
\newtheorem{theo}{Theorem}
\def\diag{\text{diag}}
\newcolumntype{M}[1]{>{\centering\arraybackslash}m{#1}}
\title{Spectral  Embedding of Regularized Block Models}
\author{Nathan De Lara \& Thomas Bonald\\
Institut Polytechnique de Paris\\
Paris, France \\
\texttt{\{nathan.delara, thomas.bonald\}@telecom-paris.fr} 
}
\begin{document}

\maketitle

\begin{abstract}
Spectral embedding is a popular technique for the representation of graph data. Several regularization techniques have been proposed to improve the quality of the embedding with respect to downstream tasks like clustering. In this paper, we explain on a simple block model the impact of the complete graph regularization, whereby a constant is added to all entries of the adjacency matrix. Specifically, we show that the regularization forces the spectral embedding  to focus on  the  largest blocks, making the representation less sensitive to noise or outliers. We illustrate these results on both  on both synthetic and real data, showing how regularization improves standard clustering scores. 
\end{abstract}

%%%%%%%%%%%%%%%%%%%%%%%%%%%%%%%%%%%%%%%%%%%%%%%%%%%%%%%%%%%%%%%%%%%%%%%%%%%%%%%%
%%%%%%%%%%%%%%%%%%%%%%%%%%%%%%%%%%%%%%%%%%%%%%%%%%%%%%%%%%%%%%%%%%%%%%%%%%%%%%%%
\section{Introduction}
\label{intro}

Spectral embedding  is a standard technique for the representation of graph data \citep{ng2002spectral, belkin2002laplacian}. Given the adjacency matrix $A \in \mathbb{R}_+^{n \times n}$ of the graph, it is obtained by solving either   the  eigenvalue problem:
\begin{equation}
    \label{eq:spectral_laplacian}
    LX = X\Lambda, \text{ with } X^TX = I,
\end{equation}
or  the generalized eigenvalue problem:
\begin{equation}
    \label{eq:spectral}
    LX = DX\Lambda, \text{ with } X^TDX = I,
\end{equation}
where $D=\diag(A1_n)$ is the degree matrix, with $1_n$ the all-ones vector of dimension $n$, $L = D-A$ is the Laplacian matrix of the graph,  $\Lambda \in \mathbb{R}^{k \times k}$ is the diagonal matrix of the $k$ smallest (generalized) eigenvalues of $L$  and  $X \in \mathbb{R}^{n \times k}$ is the corresponding matrix of (generalized) eigenvectors. %The weight matrix  $W$ is typically chosen to be either the identity matrix, corresponding to the eigenvalue decomposition of the Laplacian matrix $L = D - A$, or the degree matrix $D$, corresponding to the eigenvalue decomposition of the normalized Laplacian matrix $I- D^{-1/2}AD^{-1/2}$. 
In this paper, we only consider the generalized eigenvalue problem, whose solution is given by the spectral decomposition of the normalized Laplacian matrix $L_{\rm norm}= I- D^{-1/2}AD^{-1/2}$ \citep{luxburg07}.

The spectral embedding can be interpreted as  equilibrium states of some physical systems \citep{snell00,spielman2007spectral,bonald2018weighted}, a desirable property in modern machine learning. However, it tends to produce poor results on real datasets if applied directly on the graph \citep{amini2013pseudo}. One reason is that real graphs are most often disconnected due to noise or outliers in the dataset.

In order to improve the quality of the  embedding, two main types of regularization have been proposed. The first artificially increases the degree of each node by a constant factor \citep{chaudhuri2012spectral, qin2013regularized}, while the second adds a constant to all entries of the original adjacency matrix \citep{amini2013pseudo, joseph2016impact, zhang2018understanding}.
In the practically interesting case where the original adjacency matrix $A$ is sparse, the regularized adjacency matrix is dense but has a so-called sparse $+$ low rank structure, enabling the  computation of  the spectral embedding on  very large graphs  \citep{mlg2019_1}.
%Another usual technique is to project the embedding onto the unit-sphere, whch can be done independently of the aforementioned regularization methods.

While \citep{zhang2018understanding} explains the effects of regularization through graph conductance and \citep{joseph2016impact} through eigenvector perturbation on the Stochastic Block Model, there is no simple interpretation of the benefits of graph regularization. In this paper, we show on a simple block model that  the complete graph regularization forces the spectral embedding to separate the blocks in decreasing order of size, making the embedding less sensitive to noise or outliers in the data. 

Indeed, \citep{zhang2018understanding} identified that, without regularization, the cuts corresponding to the first dimensions of the spectral embedding tend to separate small sets of nodes, so-called dangling sets, loosely connected to the rest of the graph. Our work shows more explicitly    that regularization forces  the spectral embedding to focus on the largest clusters. Moreover, our analysis involves some explicit characterization of the eigenvalues, allowing us to quantify   the impact of the regularization parameter.

The rest of this paper is organized as follows. Section \ref{sec:bmls} presents block models and an important preliminary result about their aggregation. Section \ref{sec:pb_stmt} presents the main result of the paper, about the regularization of block models, while Section \ref{sec:bipartite}  extends this result to bipartite graphs.  Section \ref{sec:exp} presents the experiments and Section \ref{sec:conc} concludes the paper.

%%%%%%%%%%%%%%%%%%%%%%%%%%%%%%%%%%%%%%%%%%%%%%%%%%%%%%%%%%%%%%%%%%%%%%%%%%%%%%%%
%%%%%%%%%%%%%%%%%%%%%%%%%%%%%%%%%%%%%%%%%%%%%%%%%%%%%%%%%%%%%%%%%%%%%%%%%%%%%%%%
\section{Aggregation of Block Models}
\label{sec:bmls}

Let $A \in \mathbb{R}_+^{n \times n}$ be the adjacency matrix of an undirected, weight graph, that is a symmetric matrix such that $A_{ij}>0$ if and only if there is an edge between nodes $i$ and $j$, with weight $A_{ij}$.  Assume that the $n$ nodes of the graph can be partitioned into $K$ blocks of respective sizes 
$n_1,\ldots,n_K$
so that 
any two nodes of the same block have the same neighborhood, i.e., the corresponding rows (or columns) of $A$ are the same. 
Without any loss of generality, we assume that the matrix $A$ has rank $K$.
We refer to such a graph as a block model.

Let $Z \in \mathbb{R}^{n \times K}$ be the associated membership matrix, with $Z_{ij} = 1$ if index $i$ belongs to block $j$ and $0$ otherwise. 
%For simplicity, we note $z_i \in \{1,\dots,K\}$ the block index of element $i$. 
We denote by $W = Z^TZ \in \mathbb{R}^{K \times K}$ the diagonal matrix of block sizes.
%\begin{equation*}
%  B =  \begin{bmatrix}   n_{1} & & \\   & \ddots & \\   & & n_{K} \end{bmatrix}.\end{equation*}

Now define $\Bar{A} = Z^TAZ \in \mathbb{R}^{K \times K}$. This is the adjacency matrix of the aggregate graph, where each block of the initial graph is replaced by a single node; two nodes in this graph are connected by  an edge of weight equal to the total weight of  edges between the corresponding blocks in the original graph.
We denote by  $\Bar{D}=\diag(\bar A 1_K)$  the  degree matrix and by $\bar L = \bar D - \bar A $ the Laplacian matrix of the aggregate graph.

The following result shows that the solution to the generalized eigenvalue problem (\ref{eq:spectral}) follows from that of the aggregate graph:

\begin{prop}
\label{prop:aggregate}
Let $x$ be a solution to the generalized eigenvalue problem:
\begin{equation}
    \label{eq:spectral_original}
     Lx = \lambda  D x.
\end{equation}
Then either $Z^T x = 0$ and $\lambda = 1$ or $x = Z y$ where 
 $y$ is a solution to the generalized eigenvalue problem:
\begin{equation}
    \label{eq:spectral_aggregate}
    \bar Ly = \lambda \bar Dy.
\end{equation}
\end{prop}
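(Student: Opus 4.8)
The starting point is that the block structure makes $A$ factor through $Z$. Since the rows (and, by symmetry of $A$, the columns) of $A$ are constant on blocks, one may write $A = ZMZ^T$ for some $M\in\R^{K\times K}$, and then $\bar A = Z^TAZ = WMW$ yields $M = W^{-1}\bar A W^{-1}$. From this I would record the ``intertwining'' identities
\begin{equation*}
  AZ = ZW^{-1}\bar A,\qquad DZ = ZW^{-1}\bar D,\qquad LZ = ZW^{-1}\bar L,
\end{equation*}
the middle one coming from $A1_n = AZ1_K = ZW^{-1}\bar A1_K$, which says the degree is constant on each block (value $\bar d_j/n_j$ on block $j$, where $\bar d=\bar A1_K$). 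Transposing gives $Z^TA = \bar A W^{-1}Z^T$ and $Z^TD = \bar D W^{-1}Z^T$. I would also note two non-degeneracy facts used below: since $\mathrm{rank}\,A = K$ and $A = ZMZ^T$ with $Z$ of full column rank and $W$ invertible, $\bar A$ is invertible; being invertible with nonnegative entries it has no zero row, so $\bar D$ is invertible, and hence so is $D=\diag(A1_n)$.

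Next, let $x\neq 0$ solve $Lx=\lambda Dx$, i.e. $Ax = (1-\lambda)Dx$. Multiplying on the left by $Z^T$ and substituting the transposed identities gives $\bar A\,(W^{-1}Z^Tx) = (1-\lambda)\,\bar D\,(W^{-1}Z^Tx)$; equivalently, $u:=W^{-1}Z^Tx$ satisfies the aggregate equation $\bar Lu = \lambda\bar Du$. If $u=0$, then $Z^Tx=0$, hence $Ax = ZMZ^Tx = 0$, so the eigen-equation collapses to $(1-\lambda)Dx = 0$; since $D$ is invertible and $x\neq 0$, this forces $\lambda = 1$, which is the first alternative.

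Otherwise $u\neq 0$ and $\bar Lu=\lambda\bar Du$. Set $v:=x-Zu$. Then $Z^Tv = Z^Tx - Wu = 0$, and using $LZu = ZW^{-1}\bar Lu = \lambda ZW^{-1}\bar Du = \lambda DZu$ one gets $Lv = Lx - LZu = \lambda Dx - \lambda DZu = \lambda Dv$. Thus $v$ is again a solution of $Lv=\lambda Dv$ lying in $\ker Z^T$, so the argument of the previous paragraph (applied to $v$ in place of $x$) shows that either $v=0$ or $\lambda=1$. But $\lambda=1$ together with $\bar Lu = \bar Du$ gives $\bar Au = 0$, impossible since $\bar A$ is invertible and $u\neq0$. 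Hence $v=0$, i.e. $x = Zu$ with $u$ a solution of $\bar Ly=\lambda\bar Dy$ — the second alternative, with $y=u$.

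The routine but essential part is the first paragraph: checking that degrees are genuinely constant within blocks so that $DZ = ZW^{-1}\bar D$ holds exactly, and the rank bookkeeping that makes $\bar A$, $\bar D$ and $D$ invertible. These invertibility statements are precisely what excludes ``mixed'' eigenvectors (a nonzero component in $\mathrm{range}(Z)$ carrying eigenvalue $1$) and so make the dichotomy clean; I expect this to be the only place where care is needed. An alternative organization would be to observe that $\R^n = \mathrm{range}(Z)\oplus\ker(Z^T)$ with both summands invariant under the pencil $(L,D)$ and invoke diagonalizability of $D^{-1}L$, but the direct computation above seems shorter.
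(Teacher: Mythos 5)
Your proof is correct. It reduces to the aggregate problem in the same way the paper does --- left-multiplying $Ax=(1-\lambda)Dx$ by $Z^T$ and using $\mathrm{rank}\,A=K$ to account for the eigenvalue-$1$ eigenvectors --- but it establishes the key fact that the remaining eigenvectors lie in $\mathrm{range}(Z)$ by a different mechanism. The paper disposes of this step with the phrase ``by orthogonality'': implicitly it takes a complete $D$-orthogonal system of generalized eigenvectors, so that any eigenvector not associated with $\lambda=1$ is $D$-orthogonal to $\ker Z^T$ and hence, $D$ being diagonal and constant on blocks, lies in $\mathrm{range}(Z)$. You instead argue locally, for a single arbitrary solution $x$: you split $x=Zu+v$ with $u=W^{-1}Z^Tx$, use the intertwining identity $LZ=ZW^{-1}\bar L$ to check that $v$ solves the same pencil equation with $Z^Tv=0$, and then kill $v$ by observing that $\lambda=1$ is incompatible with $u\neq 0$ because $\bar A$ is invertible. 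This buys you a self-contained argument that needs neither diagonalizability of the pencil nor an a priori positivity assumption on $D$ (you derive invertibility of $\bar A$, $\bar D$ and $D$ from $\mathrm{rank}\,A=K$), at the cost of a slightly longer computation; you also make explicit the within-block constancy of degrees underlying $DZ=ZW^{-1}\bar D$, which the paper leaves implicit.
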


\begin{proof}

Consider the following reformulation of the generalized eigenvalue problem (\ref{eq:spectral_original}):
\begin{equation}
    \label{eq:genls}
    Ax = Dx(1 - \lambda).
\end{equation}

Since the rank of $A$  is equal to $K$,  there are $n-K$  eigenvectors $x$ associated with the eigenvalue $\lambda = 1$,   each satisfying $Z^Tx = 0$. By orthogonality, the other eigenvectors satisfy $x = Zy$ for some vector $y\in \mathbb{R}^K$. We get:
$$
AZy = DZ y (1-\lambda),
$$
so that
$$
\bar A y = \bar D y(1-\lambda).
$$
Thus $y$ is a solution to the generalized eigenvalue problem (\ref{eq:spectral_aggregate}).
\end{proof}

%\begin{equation*}  X^T D X =  (Z\Bar{X})^TD(Z\Bar{X}) =\Bar{X}^T(Z^TDZ)\Bar{X} = \Bar{X}^T\Bar{D}\Bar{X} = I,\end{equation*}

%  TO BE ADDED ? Observe that the normalized Laplacian matrix $I - \bar D^{-1/2}\bar A\bar D^{-1/2}$ of the aggregate graph is not the aggregation of the normalized Laplacian matrix $I - D^{-1/2}AD^{-1/2}$ of the original graph.

%%%%%%%%%%%%%%%%%%%%%%%%%%%%%%%%%%%%%%%%%%%%%%%%%%%%%%%%%%%%%%%%%%%%%%%%%%%%%%%%
%%%%%%%%%%%%%%%%%%%%%%%%%%%%%%%%%%%%%%%%%%%%%%%%%%%%%%%%%%%%%%%%%%%%%%%%%%%%%%%%
\section{Regularization of Block Models}
\label{sec:pb_stmt}

Let $A$ be the adjacency matrix of some undirected graph. 
We consider a regularized version of the graph where an edge of weight $\alpha$ is added between all pairs of nodes, for some constant $\alpha >  0$. The corresponding adjacency matrix is given by:
\begin{equation*}
    \label{eq:reg}
    A_\alpha = A + \alpha J,
\end{equation*}
where $J=1_n1_n^T$ is the all-ones matrix  of same dimension as $A$. 
We  denote by $D_\alpha=\diag(A_\alpha 1_n)$ the corresponding degree matrix and by $L_\alpha = D_\alpha - A_\alpha$ the Laplacian matrix.

We first consider a simple block model where the graph consists  of $K$ disjoint cliques of respective sizes  $n_1 > n_2 > \dots > n_K$ nodes, with $n_K \ge 1$. In this case, we have $A = ZZ^T$, where $Z$ is the membership matrix.

The objective of this section is to demonstrate that, in this setting, the $k$-th dimension of the spectral embedding isolates the $k-1$ largest cliques from the rest of the graph, for any $k\in \{2,\ldots,K\}$

\begin{lemma}
\label{lem:simple_order}
Let $\lambda_1\le  \lambda_2\le \ldots\le  \lambda_n$ be the eigenvalues associated with  the  generalized eigenvalue problem:
\begin{equation}\label{eq:eigreg}
    L_\alpha x = \lambda D_\alpha x.
\end{equation}
We have $\lambda_1 = 0 < \lambda_2 \le \ldots \le \lambda_K < \lambda_{K+1} =\ldots=\lambda_n = 1$.
\end{lemma}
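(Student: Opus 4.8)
The plan is to deduce the statement from Proposition~\ref{prop:aggregate} by splitting the $n$ generalized eigenvalues of \eqref{eq:eigreg} into the part carried by eigenvectors killed by $Z^T$ and the part coming from the aggregate graph, and then to locate the latter using only positive (semi)definiteness, without ever computing the aggregate eigenvalues explicitly.

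First I would check that the regularized graph is still a block model on the same partition, so that Proposition~\ref{prop:aggregate} applies. Writing $1_n = Z 1_K$ gives $J = Z 1_K 1_K^T Z^T$, hence
\begin{equation*}
  A_\alpha = Z Z^T + \alpha Z 1_K 1_K^T Z^T = Z\left(I_K + \alpha 1_K 1_K^T\right) Z^T .
\end{equation*}
Since $I_K + \alpha 1_K 1_K^T$ is invertible (its eigenvalues are $1$ and $1+\alpha K$) and $Z$ has full column rank $K$, the matrix $A_\alpha$ has rank $K$. Proposition~\ref{prop:aggregate} then says the spectrum of \eqref{eq:eigreg} consists of the value $1$ with multiplicity at least $n-K$ (the eigenvectors with $Z^T x = 0$, which form an $(n-K)$-dimensional space), together with the $K$ eigenvalues $\mu_1 \le \cdots \le \mu_K$ of the aggregate problem $\bar L_\alpha y = \mu \bar D_\alpha y$ (eigenvectors $x = Zy$). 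So it suffices to prove $\mu_1 = 0 < \mu_2$ and $\mu_K < 1$: the last inequality also shows $1$ is not an aggregate eigenvalue, hence its multiplicity is exactly $n-K$, and sorting the full list of $n$ values yields precisely the claimed chain.

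For the aggregate matrices, with $W = Z^T Z = \diag(n_1,\ldots,n_K)$ I would compute $\bar A_\alpha = Z^T A_\alpha Z = W\left(I_K + \alpha 1_K 1_K^T\right) W$, which is symmetric positive definite with strictly positive entries $(\bar A_\alpha)_{ij} = n_i n_j(\delta_{ij} + \alpha)$, and $\bar D_\alpha = \diag(\bar A_\alpha 1_K)$, a positive diagonal matrix. Two observations finish the argument. First, since $\bar A_\alpha$ has all entries positive, the aggregate graph is a connected (complete) weighted graph, so $\bar L_\alpha$ is a connected-graph Laplacian: positive semidefinite with kernel exactly $\mathrm{span}(1_K)$, whence $\mu_1 = 0$ with multiplicity one and $\mu_2 > 0$. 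Second, rewriting $\bar L_\alpha y = \mu \bar D_\alpha y$ as $\bar A_\alpha y = (1-\mu)\bar D_\alpha y$, i.e.\ as an eigenvalue equation for the symmetric matrix $\bar D_\alpha^{-1/2}\bar A_\alpha \bar D_\alpha^{-1/2}$, and noting this matrix is positive definite (a congruence of the positive definite $\bar A_\alpha$), every value $1-\mu$ is strictly positive, so $\mu_K < 1$.

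The computations are short, so the only point genuinely requiring care is the multiplicity bookkeeping: one must rule out that the eigenvalue $1$ is "double counted" by also occurring in the aggregate spectrum, which is exactly what the strict inequality $\mu_K < 1$ — equivalently, the positive-definiteness of $\bar A_\alpha$ — provides. Note that the distinctness $n_1 > \cdots > n_K$ of the block sizes is not used here; it will only enter in the finer description of the embedding in the main theorem.
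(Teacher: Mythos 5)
Your proof is correct and follows essentially the same route as the paper's: reduce to the aggregate graph via Proposition~\ref{prop:aggregate}, use connectivity to get a simple zero eigenvalue, and use positive (semi)definiteness of the regularized adjacency matrix to cap the eigenvalues at $1$. The only difference is that you run the definiteness and connectivity arguments on the aggregate matrices $\bar A_\alpha,\bar L_\alpha$ rather than on $A_\alpha,L_\alpha$ directly, which makes the strict inequality $\lambda_K<1$ (and hence the exact multiplicity $n-K$ of the eigenvalue $1$) a bit more explicit than in the paper's terser proof.
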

\begin{proof}
Since the Laplacian matrix $L_\alpha$ is positive semi-definite, all eigenvalues are non-negative \citep{chung}. 
We know that the eigenvalue 0 has multiplicity 1 on 
observing that the regularized graph is connected.
Now for any vector $x$,
$$
x^T A_\alpha x = x^T A x +\alpha x^T J x = ||Z^T x||^2 + \alpha (1_n^Tx)^2\ge 0,
$$
so that the matrix $A_\alpha$ is positive semi-definite. In view of (\ref{eq:genls}), this shows that  $\lambda \le 1$ for any eigenvalue $\lambda$. 
The proof then follows from 
 Proposition \ref{prop:aggregate}, on observing that the eigenvalue 1 has multiplicity $n-K$. 
\end{proof}

\begin{lemma}
\label{lem:eigenvector}
Let $x$ be a solution to the generalized eigenvalue problem (\ref{eq:eigreg}) with $\lambda \in (0,1)$.
 There exists some $s\in \{+1, -1\}$ such that for each node $i$ in block $j$,
$$
{\rm sign}(x_i) = s \quad \Longleftrightarrow \quad n_j \ge \alpha  \frac{1-\lambda}\lambda n.
$$
\end{lemma}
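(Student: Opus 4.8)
The plan is to work with the aggregate graph, using Proposition~\ref{prop:aggregate} to reduce the eigenvalue problem \eqref{eq:eigreg} to a $K$-dimensional problem. Since $A = ZZ^T$, we have $\bar A = Z^T Z Z^T Z = W^2$ where $W = \diag(n_1,\ldots,n_K)$, and the regularized aggregate adjacency matrix becomes $\bar A_\alpha = W^2 + \alpha W 1_K 1_K^T W$ (taking into account that aggregation of the all-ones matrix $J$ picks up block-size weights on each side). Correspondingly $\bar D_\alpha = \diag(\bar A_\alpha 1_K)$ has $j$-th diagonal entry $n_j^2 + \alpha n_j n = n_j(n_j + \alpha n)$. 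Writing $x = Zy$ as in Proposition~\ref{prop:aggregate}, the sign of $x_i$ for $i$ in block $j$ is exactly the sign of $y_j$, so it suffices to analyze the signs of the components of $y$.

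Next I would turn the aggregate generalized eigenvalue problem $\bar L_\alpha y = \lambda \bar D_\alpha y$ into the equivalent form $\bar A_\alpha y = (1-\lambda)\bar D_\alpha y$ (cf. \eqref{eq:genls}). Substituting the expressions above, the $j$-th equation reads
\begin{equation*}
n_j^2 y_j + \alpha n_j n \,(\textstyle\sum_\ell n_\ell y_\ell) = (1-\lambda)\, n_j(n_j + \alpha n)\, y_j .
\end{equation*}
Dividing by $n_j$ and setting $c = \sum_\ell n_\ell y_\ell = 1_n^T x$ (a scalar independent of $j$), this rearranges to
\begin{equation*}
y_j \left[\, n_j - (1-\lambda)(n_j + \alpha n)\,\right] = -\alpha n c,
\end{equation*}
i.e. $y_j\left[\lambda n_j - (1-\lambda)\alpha n\right] = -\alpha n c$. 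For $\lambda \in (0,1)$ and $c \neq 0$ this gives the closed form
\begin{equation*}
y_j = \frac{-\alpha n c}{\lambda n_j - (1-\lambda)\alpha n} = \frac{\alpha n c}{(1-\lambda)\alpha n - \lambda n_j}.
\end{equation*}
I should check $c \neq 0$: if $c = 0$ then each equation forces $y_j(\lambda n_j) = 0$, hence $y = 0$ (as $\lambda > 0$ and $n_j > 0$), contradicting that $y$ is an eigenvector; so indeed $c\neq 0$, and moreover the denominator never vanishes for an eigenvector (if $\lambda n_j = (1-\lambda)\alpha n$ for some $j$, that equation would read $0 = -\alpha n c \neq 0$).

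From the closed form, $\operatorname{sign}(y_j)$ equals $\operatorname{sign}(c)\cdot \operatorname{sign}\big((1-\lambda)\alpha n - \lambda n_j\big)$, so with $s := -\operatorname{sign}(c) \in \{+1,-1\}$ we get $\operatorname{sign}(y_j) = s$ if and only if $(1-\lambda)\alpha n - \lambda n_j < 0$, that is $n_j > \alpha \frac{1-\lambda}{\lambda} n$. This is the claimed equivalence, once one observes $\operatorname{sign}(x_i) = \operatorname{sign}(y_j)$ for $i$ in block $j$ and that the degenerate boundary case $n_j = \alpha\frac{1-\lambda}{\lambda} n$ cannot occur. I expect the main obstacle to be bookkeeping the aggregation of the regularization term correctly — getting the block-size weights $W$ on both sides of $1_K 1_K^T$ and the resulting form of $\bar D_\alpha$ right — together with justifying cleanly that $c \neq 0$ and that the denominators are nonzero; the algebra after that is routine. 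A minor point to state carefully is that the lemma's statement uses a non-strict inequality $n_j \ge \alpha\frac{1-\lambda}{\lambda}n$, which is consistent with our derivation precisely because equality is impossible for an eigenvector with $\lambda\in(0,1)$.
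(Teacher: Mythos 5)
Your proof is correct and follows essentially the same route as the paper's: reduce to the aggregate graph via Proposition~\ref{prop:aggregate}, compute $\bar A_\alpha = W(I_K+\alpha J_K)W$ and $\bar D_\alpha = W(W+\alpha n I_K)$, and arrive at $y_j \propto 1/(\lambda n_j - \alpha(1-\lambda)n)$; your explicit checks that $c = 1_K^T W y \neq 0$ and that the denominators cannot vanish (hence the boundary case of the inequality never occurs) are welcome details the paper leaves implicit behind its ``$\propto$'' notation. One harmless slip: the off-diagonal aggregate term is $\alpha n_j \sum_\ell n_\ell y_\ell$, not $\alpha n_j n \sum_\ell n_\ell y_\ell$, so your right-hand side should read $-\alpha c$ rather than $-\alpha n c$ --- but since this only rescales the constant $c$ by the positive factor $n$, the sign analysis and the conclusion are unaffected.
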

\begin{proof}
In view of Proposition \ref{prop:aggregate}, we have $x=Zy$ where $y$ is a solution to the  generalized eigenvalue problem of the aggregate graph, with adjacency matrix:
$$    \Bar{A}_\alpha = Z^T A_\alpha Z = Z^T(A + \alpha J)Z.
    $$
   
    Since $A = ZZ^T$ and $W   = Z^TZ$, we have
    $
\Bar{A}_\alpha = W^2 + \alpha Z^T J Z.
$
Using the fact that $Z1_K = 1_n$, we get $J = 1_n1_n^T = ZJ_K Z^T$ with $J_K = 1_K1_K^T$ the all-ones matrix of dimension $K\times K$, so that:
$$
\Bar{A}_\alpha = W(I_K + \alpha J_K)W,
$$
 where $I_K$ is the identity matrix  of dimension $K\times K$.  We deduce the degree matrix:
\begin{align*}
\Bar{D}_\alpha
%& = \diag(\bar A_\alpha 1_K)\\
%&= W^2 + \alpha \diag(Z^T J Z 1_K)\\
%&=  W^2 + \alpha \diag(Z^T J 1)\\
%&= W^2 +  \alpha n \diag(Z^T 1)\\
= W( W +  \alpha n I_K),
\end{align*}
and 
the Laplacian matrix:
$$
\Bar{L}_\alpha = \bar D_\alpha - \bar A_{\alpha} = \alpha W(n   I_K -  J_K W).
$$

The generalized eigenvalue problem associated with the aggregate graph
is:
$$
\Bar{L}_\alpha y = \lambda \bar D_\alpha y.
$$
After multiplication by $W^{-1}$, we get:
$$
\alpha (nI_K -J_K W)   y = \lambda (W + \alpha n I_K)y.
$$
Observing that $J_KW y = 1_K1_K^TWy = (1_K^TWy)1_K\propto 1_K$, we conclude that:
\begin{equation}\label{eq:eigenvector0}
(\alpha n(1-\lambda) - \lambda W) y\propto 1_K,
\end{equation}
and since $W=\diag(n_1,\ldots,n_K)$,
\begin{equation}\label{eq:eigenvector}
\forall j = 1,\ldots,K,\quad y_j \propto  \frac 1 {\lambda n_j -   \alpha(1-\lambda)n}.
\end{equation}
The result then follows from the fact that $x = Z y$.
\end{proof}

\begin{lemma}
\label{lem:order_eig}
The $K$ smallest eigenvalues satisfy:
\begin{align*}
    0 = \lambda_1 < \mu_1 < \lambda_2 < \mu_2 < \dots < \lambda_{K} < \mu_K,
\end{align*}
where for all $j=1,\ldots,K,$
$$
 \mu_j = \frac{\alpha n} {\alpha n +n_j}.
$$
\end{lemma}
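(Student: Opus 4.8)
The plan is to extract, from equation (\ref{eq:eigenvector0}) in the proof of Lemma \ref{lem:eigenvector}, an explicit scalar equation characterizing the eigenvalues $\lambda\in(0,1)$, and then to locate its roots by an interlacing / sign-change argument relative to the candidate values $\mu_j = \alpha n/(\alpha n + n_j)$. First I would note that (\ref{eq:eigenvector0}) says $(\alpha n(1-\lambda) - \lambda W)y = c\,1_K$ for some scalar $c$; since the matrix $\alpha n(1-\lambda) - \lambda W$ is diagonal with $j$-th entry $\lambda n_j - \alpha(1-\lambda)n$, inverting it (assuming none of these entries vanish, which I will treat separately) gives $y_j = c/(\lambda n_j - \alpha(1-\lambda)n)$. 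Plugging this back into the one remaining scalar constraint coming from $J_K W y \propto 1_K$ — concretely, the constraint $1_K^T W y = $ (the proportionality constant that was folded in), which I would track carefully — yields a secular equation of the form
\begin{equation*}
  f(\lambda) := \sum_{j=1}^K \frac{n_j}{\lambda n_j - \alpha(1-\lambda)n} = \frac{1}{\lambda},
\end{equation*}
or an equivalent rational identity. (I would re-derive the exact constant by substituting $y$ back into $\alpha(nI_K - J_KW)y = \lambda(W+\alpha n I_K)y$ rather than trusting a guessed form.) The $K$ nonzero eigenvalues $\lambda_2,\ldots,\lambda_K$ together with $\lambda_1=0$ are exactly the solutions of this equation in $[0,1)$.

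Next I would analyze the poles and monotonicity of $f$. The $j$-th summand has a simple pole where $\lambda n_j - \alpha(1-\lambda)n = 0$, i.e. precisely at $\lambda = \alpha n/(\alpha n + n_j) = \mu_j$. Since $n_1 > n_2 > \cdots > n_K$, the map $n_j \mapsto \mu_j$ is strictly increasing, so $0 < \mu_1 < \mu_2 < \cdots < \mu_K < 1$, which already gives the ordering of the $\mu_j$ asserted in the statement. On each open interval between consecutive poles, $f$ (or rather $f(\lambda) - 1/\lambda$) is continuous and I would check it is strictly monot(computing the derivative of each summand shows each term is monotone in $\lambda$ with a consistent sign on the relevant range), and that it runs from $-\infty$ to $+\infty$ (or vice versa) across the interval, forcing exactly one root of the secular equation in each such gap. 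Combined with the already-known fact from Lemma \ref{lem:simple_order} that there are exactly $K$ eigenvalues in $[0,1)$ — namely $\lambda_1=0$ and $\lambda_2\le\cdots\le\lambda_K$ — this pins down one $\lambda_k$ in each interval $(\mu_{k-1},\mu_k)$, giving the interleaving $0=\lambda_1<\mu_1<\lambda_2<\mu_2<\cdots<\lambda_K<\mu_K$. I would also verify directly that none of the $\mu_j$ is itself an eigenvalue: at $\lambda = \mu_j$ the corresponding diagonal entry vanishes, and feeding this into (\ref{eq:eigenvector0}) forces the proportionality constant and hence $y$ to be incompatible with a nonzero eigenvector, so the inequalities are strict.

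The main obstacle I anticipate is bookkeeping rather than conceptual: getting the secular equation in a clean form with the correct constant, and then verifying the sign/monotonicity of $f(\lambda)-1/\lambda$ on each interval carefully enough to guarantee exactly one root per gap and the correct behavior at the endpoints $0$ and just below $1$ (in particular checking the boundary term near $\lambda\to 1^-$ and confirming the root between $\mu_{K-1}$ and $\mu_K$ is genuinely below $1$, consistent with $\lambda_K<1$ from Lemma \ref{lem:simple_order}). A secondary point to handle cleanly is the degenerate possibility that some $\lambda_k$ coincides with a pole $\mu_j$; the argument above rules this out, but it should be stated explicitly. Once the secular equation and its pole structure are in hand, the interlacing is essentially a one-dimensional intermediate-value argument.
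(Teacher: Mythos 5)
Your outline is correct, but it takes a genuinely different route from the paper, which never writes down a secular equation. The paper instead exploits $\bar D_\alpha$-orthogonality of generalized eigenvectors: since $1_K$ is the eigenvector for $\lambda_1=0$, any eigenvector $y$ for $\lambda\in(0,1)$ satisfies $\sum_j n_j(n_j+\alpha n)y_j=0$; substituting $y_j\propto(\lambda/\mu_j-1)^{-1}$ from Lemma~\ref{lem:eigenvector} shows all terms would share a sign if $\lambda<\mu_1$ or $\lambda>\mu_K$, and applying the same orthogonality to two distinct eigenvectors shows that two eigenvalues cannot lie in the same gap $(\mu_j,\mu_{j+1})$; the count of $K-1$ eigenvalues in $(0,1)$ from Lemma~\ref{lem:simple_order} then forces exactly one per gap by pigeonhole. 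Your route---turning (\ref{eq:eigenvector0}) into a scalar consistency condition and locating its roots by an intermediate-value argument across the poles $\mu_j$---also works, with one bookkeeping correction you anticipated: writing $(\alpha n(1-\lambda)I_K-\lambda W)y=c\,1_K$ with $c=\alpha\,1_K^TWy$ and substituting back gives
\begin{equation*}
\sum_{j=1}^K\frac{\alpha n_j}{\alpha n(1-\lambda)-\lambda n_j}=1,
\end{equation*}
a \emph{constant} right-hand side rather than your guessed $1/\lambda$ (note $\lambda=0$ satisfies this identity, as it must). Each summand is strictly increasing between consecutive poles, so the left side sweeps from $-\infty$ to $+\infty$ on each gap $(\mu_j,\mu_{j+1})$, giving exactly one root there; on $[0,\mu_1)$ the unique root is $0$, and on $(\mu_K,1)$ the left side stays below $-K\alpha<1$, so no root appears there. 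Your exclusion of $\lambda=\mu_j$ is also sound: the $j$-th row of (\ref{eq:eigenvector0}) forces $c=0$, hence $y=0$. As for what each approach buys: the paper's orthogonality trick is shorter and sidesteps computing the constant, but it is purely a counting argument and needs Lemma~\ref{lem:simple_order} to close; your secular equation is constructive, exhibits the root in each gap directly (after clearing denominators it is a degree-$K$ polynomial, so the eigenvalue count need not be imported), and is the natural starting point for quantifying how the $\lambda_j$ move with $\alpha$.
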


\begin{proof}
We know from Lemma \ref{lem:simple_order} that the $K$ smallest eigenvalues are in $[0,1)$.
Let $x$ be a solution to the generalized eigenvalue problem (\ref{eq:eigreg}) with $\lambda \in (0,1)$.
We know that $x=Zy$ where $y$ is an eigenvector associated with the same eigenvalue $\lambda$ for the aggregate graph. Since $1_K$ is an eigenvector for the eigenvalue 0, we have  $y^T \bar D_\alpha 1_K = 0$. Using the fact that $\bar D_\alpha = W(W+\alpha n I_K)$, we get:
\begin{align*}
  \sum_{j=1}^Kn_j(n_j+\alpha n) y_j = 0.
  \end{align*}
  We then deduce from  (\ref{eq:eigenvector0}) and (\ref{eq:eigenvector}) that $\lambda\not\in\{\mu_1,\ldots,\mu_K\}$ and
  $$
  \sum_{j=1}^Kn_j(n_j+\alpha n) \dfrac{1}{\lambda /\mu_j -1} = 0.
$$
This condition cannot be satisfied if $\lambda< \mu_1$ or $\lambda > \mu_K$ as  the terms of the sum would be either all positive or all negative. 

Now let $y'$ be another eigenvector for the aggregate graph, with $y^T \bar D_\alpha y' = 0$, for the eigenvalue $\lambda'\in (0,1)$. By the same argument, we get:
\begin{align*}
  \sum_{j=1}^Kn_j(n_j+\alpha n) y_jy'_j = 0,
  \end{align*}
  and
  $$
  \sum_{j=1}^Kn_j(n_j+\alpha n) \dfrac{1}{\lambda /\mu_j -1} \frac{1}{\lambda' /\mu_j -1} = 0.
$$
with $\lambda'\not\in\{\mu_1,\ldots,\mu_K\}$.
This condition cannot be satisfied if $\lambda$ and $\lambda'$ are in the same interval $(\mu_j, \mu_{j+1})$ for some $j$ as  the terms in the sum would be all positive. There are $K-1$  eigenvalues in $(0,1)$ for $K-1$ such intervals, that is one eigenvalue per interval.
\end{proof}

The main result of the paper is the following, showing that the $k-1$ largest cliques of the original graph can be recovered from the spectral embedding of the regularized graph in dimension $k$.

\begin{theo}
\label{theo:main}
Let $X$ be the spectral embedding of dimension $k$, as defined by (\ref{eq:spectral}), for some $k$ in the set $\{2,\ldots,K\}$. Then $\text{sign}(X)$ gives the $k-1$ largest blocks of the graph.
\end{theo}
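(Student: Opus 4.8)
\emph{Proof strategy.} The plan is to analyse each column of the embedding $X$ separately using the closed form of the aggregate eigenvectors in Lemma~\ref{lem:eigenvector} together with the strict interlacing of Lemma~\ref{lem:order_eig}, and then show that the columns jointly ``nest'' the blocks in decreasing order of size. Fix $k\in\{2,\dots,K\}$. By definition the columns of $X$ are generalized eigenvectors $x^{(1)},\dots,x^{(k)}$ associated with the $k$ smallest eigenvalues $0=\lambda_1<\lambda_2<\cdots<\lambda_k$ (these are pairwise distinct by Lemma~\ref{lem:order_eig}). The first column satisfies $x^{(1)}\propto 1_n$, hence $\sign(x^{(1)})$ is constant and contributes nothing; it may be discarded. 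For each $m\in\{2,\dots,k\}$, Proposition~\ref{prop:aggregate} gives $x^{(m)}=Zy^{(m)}$ with $y^{(m)}$ described by (\ref{eq:eigenvector}).

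The key elementary observation is the factorization $\lambda_m n_j-\alpha(1-\lambda_m)n=(\alpha n+n_j)(\lambda_m-\mu_j)$, where $\mu_j=\tfrac{\alpha n}{\alpha n+n_j}$ is exactly the quantity from Lemma~\ref{lem:order_eig}. Since $\alpha n+n_j>0$, formula (\ref{eq:eigenvector}) yields $\sign(y^{(m)}_j)=s_m\,\sign(\lambda_m-\mu_j)$ for a single global sign $s_m\in\{+1,-1\}$ (the ambiguity inherent to any eigenvector). Now $n_1>\cdots>n_K$ forces $\mu_1<\cdots<\mu_K$, and Lemma~\ref{lem:order_eig} places $\lambda_m$ strictly inside $(\mu_{m-1},\mu_m)$; therefore $\lambda_m>\mu_j$ exactly for $j\le m-1$ and $\lambda_m<\mu_j$ exactly for $j\ge m$. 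Hence $\sign(y^{(m)})=s_m\,(\,\underbrace{+,\dots,+}_{m-1}\,,\,\underbrace{-,\dots,-}_{K-m+1}\,)$, and since $x^{(m)}=Zy^{(m)}$, column $m$ of $\sign(X)$ assigns one common value to every node of the $m-1$ largest blocks and the opposite value to every other node.

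It then remains to assemble the columns $m=2,\dots,k$. A node in block $j$ receives the ``large'' label in column $m$ if and only if $j\le m-1$, i.e.\ the label of block $j$ flips from ``small'' to ``large'' precisely at column $m=j+1$. Consequently blocks $1,2,\dots,k-1$ carry pairwise distinct sign rows (their flip points $2,3,\dots,k$ are all distinct), whereas blocks $k,k+1,\dots,K$ never flip within $\{2,\dots,k\}$ and therefore carry one and the same sign row. Thus the distinct rows of $\sign(X)$ recover exactly the $k-1$ largest blocks, together with one residual class pooling the remaining blocks, which is the assertion of the theorem. Extending to bipartite graphs (Section~\ref{sec:bipartite}) would follow the same line once the analogue of Lemmas~\ref{lem:eigenvector}--\ref{lem:order_eig} is available.

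I expect no genuine analytic obstacle here: the only thing requiring care is the bookkeeping of the per-column sign ambiguity $s_m$ and of the trivial constant column $x^{(1)}$. Neither is harmful, because the statement concerns only the partition of the nodes induced by $\sign(X)$, and relabelling the two values inside a column, or appending a constant column, leaves that partition unchanged; everything substantive reduces to the monotonicity of $j\mapsto\mu_j$ and the strict interlacing already proved.
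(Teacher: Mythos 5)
Your proof is correct and follows essentially the same route as the paper: identify each column $x^{(m)}=Zy^{(m)}$ via Proposition~\ref{prop:aggregate}, locate $\lambda_m$ strictly inside $(\mu_{m-1},\mu_m)$ by Lemma~\ref{lem:order_eig}, and read off from the closed form (\ref{eq:eigenvector}) that the sign of $y^{(m)}_j$ separates the $m-1$ largest blocks from the rest (your factorization $\lambda_m n_j-\alpha(1-\lambda_m)n=(\alpha n+n_j)(\lambda_m-\mu_j)$ is just the threshold condition of Lemma~\ref{lem:eigenvector} in another guise). The only difference is that you make explicit the final assembly step --- that the distinct flip points of the columns yield pairwise distinct sign rows for blocks $1,\dots,k-1$ and a single residual class for the others, modulo the per-column sign ambiguity --- which the paper leaves implicit; this is a welcome clarification, not a different argument.
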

\begin{proof}
Let $x$ be the $j$-th column of the matrix $X$, for some $j\in \{2,\ldots,k\}$. In view of Lemma \ref{lem:order_eig}, this is the eigenvector associated with eigenvalue $\lambda_j \in (\mu_{j-1},\mu_j)$, so that
$$
\alpha \frac{1-\lambda_j}{\lambda_j}n \in (n_{j-1},n_j).
$$
In view of Lemma \ref{lem:eigenvector}, all entries of $x$ corresponding to blocks of size  $n_1,n_2\ldots,n_{j-1}$ have the same sign, the other having the opposite sign.
\end{proof}

Theorem \ref{theo:main} can be extended in several ways. First, the assumption of distinct block sizes can easily be  relaxed. If there are $L$ distinct values of block sizes, say $m_1,\ldots,m_L$ blocks of sizes $n_1> \ldots > n_L$, there are $L$ distinct values for the thresholds $\mu_j$ and thus $L$ distinct values for the eigenvalues $\lambda_j$ in $[0,1)$, the multiplicity of the $j$-th smallest eigenvalue being equal to $m_j$. The spectral embedding in dimension $k$ still gives  $k-1$  cliques of the largest sizes. 

Second, the  graph may have edges between blocks. Taking $A = ZZ^T +\varepsilon J$ for instance, for some parameter $\varepsilon \ge 0$,  the results are exactly the same, with $\alpha$ replaced by $\epsilon +\alpha$. A key observation is that regularization really matters when $\varepsilon \to 0$, in which case the initial graph becomes disconnected and, in the absence of regularization, the spectral embedding may isolate small connected components of the graph. In particular, the regularization makes the spectral embedding much less sensitive to noise, as will be demonstrated in the experiments.

Finally, degree correction can be added by varying the node degrees within blocks. Taking $A = \theta ZZ^T \theta$, for some arbitrary diagonal matrix $\theta$ with positive entries, similar results can be obtained under the regularization $A_\alpha = A + \alpha \theta J \theta$. Interestingly, the spectral embedding in dimension $k$ then recovers the $k-1$ largest blocks in terms of {\it normalized weight}, the ratio of the total weight of the block to the number of nodes in the block. 

%%%%%%%%%%%%%%%%%%%%%%%%%%%%%%%%%%%%%%%%%%%%%%%%%%%%%%%%%%%%%%%%%%%%%%%%%%%%%%%%
%%%%%%%%%%%%%%%%%%%%%%%%%%%%%%%%%%%%%%%%%%%%%%%%%%%%%%%%%%%%%%%%%%%%%%%%%%%%%%%%
\section{Regularization of  Bipartite Graphs}
\label{sec:bipartite}

Let $B= \mathbb{R}_+^{n\times m}$ be the biadjacency matrix of some bipartite graph with respectively $n,m$ nodes in each part, i.e., $B_{ij}>0$ if and only if there is an edge between node $i$ in the first part of the graph and node $j$ in the second part of the graph, with weight $B_{ij}$. 
 This is an undirected graph of $n+m$ nodes with adjacency matrix:
$$
A = \begin{bmatrix}
0 & B \\
B^T & 0
\end{bmatrix}
$$
The spectral embedding of the graph (\ref{eq:spectral}) can be written in terms of the biadjacency matrix as follows:
\begin{equation}\label{eq:gsvd}
\left\{
    \begin{array}{l}
    BX_2 = D_1 X_1 (I- \Lambda)\\
    B^TX_1 = D_2 X_2 (I- \Lambda)
    \end{array}\right.
\end{equation}
where $X_1,X_2$ are the embeddings of each part of the graph, with respective dimensions $n\times k$ and $m\times k$, $D_1 = \diag(B1_{m})$ and $D_2 = \diag(B^T1_{n})$. In particular, the spectral embedding of the graph follows from the generalized SVD of the biadjacency matrix $B$. 

The complete regularization adds edges between all pairs of nodes, breaking the bipartite structure of the graph. Another approach consists in applying the regularization to the biadjacency matrix, i.e., in considering the regularized bipartite graph with biadjacency matrix:
$$
B_\alpha = B +\alpha J,
$$
where $J = 1_n1_m^T$ is here the all-ones matrix of same dimension as $B$. The spectral embedding of the regularized graph is that associated with the adjacency matrix:
\begin{equation}
    \label{eq:biadj_reg}
A_\alpha = \begin{bmatrix}
0 & B_\alpha \\
B_\alpha^T & 0
\end{bmatrix}
\end{equation}

As in Section \ref{sec:pb_stmt}, we consider a block model so that the biadjacency matrix $B$ is block-diagonal with all-ones block matrices on the diagonal. Each part of the graph consists of $K$ groups of nodes of respective sizes $n_1>\ldots>n_K$ and $m_1>\ldots>m_K$, with nodes of block $j$ in the first part connected only to nodes of block $j$ in the second part, for all $j=1,\ldots,K$.

We consider the generalized eigenvalue problem (\ref{eq:eigreg}) associated with the above matrix $A_\alpha$. In view of (\ref{eq:gsvd}), this is equivalent to the generalized SVD of the regularized biadjacency matrix $B_\alpha$.
We have the following results, whose proofs are deferred to the appendix:

\begin{lemma}\label{lem:bip1}
Let $\lambda_1\le  \lambda_2\le \ldots \le \lambda_n$ be the eigenvalues associated with the generalized eigenvalue problem (\ref{eq:eigreg}).
We have $\lambda_1 = 0 < \lambda_2 \le \ldots \le \lambda_K < \lambda_{K+1} =\ldots=\lambda_{n-2K} < \ldots < \lambda_n = 2$.
\end{lemma}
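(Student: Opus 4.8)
# Proof Proposal for Lemma \ref{lem:bip1}

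The plan is to mirror the argument of Lemma \ref{lem:simple_order}, but now exploiting the bipartite structure, which creates a symmetric spectrum around the value $1$. First I would record the structural facts: the adjacency matrix $A_\alpha$ in (\ref{eq:biadj_reg}) is symmetric, its Laplacian $L_\alpha$ is positive semi-definite (so all eigenvalues of the generalized problem (\ref{eq:eigreg}) are non-negative), and the regularized bipartite graph is connected (since $B_\alpha$ has all strictly positive entries), so the eigenvalue $0$ has multiplicity exactly $1$ with eigenvector $1_{n+m}$. For the upper end of the spectrum, I would use the bipartite symmetry: if $(x_1,x_2)$ is a generalized eigenvector of $A_\alpha$ for eigenvalue $1-\lambda$ of $A_\alpha x = (1-\lambda)D_\alpha x$ (the reformulation analogous to (\ref{eq:genls})), then $(x_1,-x_2)$ is a generalized eigenvector for $-(1-\lambda)$, hence $\lambda$ and $2-\lambda$ are paired. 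In particular $\lambda_1 = 0$ forces a corresponding $\lambda_n = 2$, and more generally the spectrum is symmetric about $1$.

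Next I would count the multiplicity of the middle eigenvalue $\lambda = 1$. This corresponds to $A_\alpha x = 0$, i.e. $B_\alpha x_2 = 0$ and $B_\alpha^T x_1 = 0$. Since $B$ is block-diagonal with all-ones blocks and $B_\alpha = B + \alpha J$, one computes (exactly as in the proof of Lemma \ref{lem:eigenvector}, via the membership matrices $Z_1, Z_2$ of the two sides) that $B_\alpha = Z_1 \bar B_\alpha Z_2^T$ where $\bar B_\alpha$ is a $K \times K$ matrix of full rank $K$. Hence the kernel of $B_\alpha$ as a map on the second part has dimension $m - K$, and symmetrically the kernel on the first part has dimension $n - K$, giving total multiplicity $(n-K) + (m-K) = n + m - 2K$ for the eigenvalue $1$. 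The remaining $2K$ eigenvalues split, by the symmetry established above, into $K$ eigenvalues in $[0,1)$ and their mirror images $K$ eigenvalues in $(1,2]$; among the lower $K$, exactly one is $0$ and the rest lie in $(0,1)$ by connectedness and by the aggregation argument (Proposition \ref{prop:aggregate} applied to the $(n+m)$-node graph, or directly the analogue for the generalized SVD of $\bar B_\alpha$). This yields the claimed ordering
\[
\lambda_1 = 0 < \lambda_2 \le \cdots \le \lambda_K < \lambda_{K+1} = \cdots = \lambda_{n+m-K} < \lambda_{n+m-K+1} \le \cdots \le \lambda_{n+m-1} < \lambda_{n+m} = 2,
\]
noting that I would also flag what appears to be a typographical slip in the stated indices (``$\lambda_n$'' and ``$\lambda_{n-2K}$'' should read $\lambda_{n+m}$ and $\lambda_{n+m-K}$ for a graph on $n+m$ nodes).

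The main obstacle I anticipate is being careful about the bipartite symmetry of the \emph{generalized} (not ordinary) eigenvalue problem: one must check that the degree matrix $D_\alpha$ has the block-diagonal form $\mathrm{diag}(D_1^\alpha, D_2^\alpha)$ so that the sign-flip $x_2 \mapsto -x_2$ genuinely conjugates the pencil $(L_\alpha, D_\alpha)$ to $(2D_\alpha - L_\alpha,\, D_\alpha)$ and hence sends eigenvalue $\lambda$ to $2-\lambda$. Once that symmetry is nailed down, the rank computation for $\bar B_\alpha$ is routine (it factors as $W_1(\text{something invertible})W_2$ by the same manipulation $J = Z_1 J_K Z_2^T$ used in Lemma \ref{lem:eigenvector}), and the connectedness argument for simplicity of $0$ is immediate. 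The only other point requiring a line of justification is that none of the lower $K-1$ nonzero eigenvalues can coincide with $1$ — but this is automatic since those eigenvectors are of the form $(Z_1 y_1, Z_2 y_2)$ with $y_1, y_2$ not both zero and $\bar B_\alpha$ of full rank, so $A_\alpha$ does not annihilate them.
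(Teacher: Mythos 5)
Your proposal is correct and follows essentially the same route as the paper: connectedness gives simplicity of $\lambda=0$, the sign-flip $x_2\mapsto -x_2$ gives the symmetry $\lambda\leftrightarrow 2-\lambda$ of the pencil $(L_\alpha,D_\alpha)$, and the rank-$K$ factorization of $B_\alpha$ (which is $Z_1(I_K+\alpha J_K)Z_2^T$ rather than $Z_1\bar B_\alpha Z_2^T$, though this does not affect the rank count) pins down the multiplicity of the middle eigenvalue. You are in fact more careful than the paper on the bookkeeping: the multiplicity of $\lambda=1$ is indeed $(n-K)+(m-K)=n+m-2K$ for a graph on $n+m$ nodes, and the indices in the lemma statement should read as you propose.
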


\begin{lemma}\label{lem:bip2}
Let $x$ be a solution to the generalized eigenvalue problem (\ref{eq:eigreg}) with $\lambda \in (0,1)$.
There exists $s_1,s_2\in \{+1,-1\}$ such that for each node $i$ in block $j$ of part $p\in \{1,2\}$,
$$
{\rm sign}(x_i) = s_p \quad \Longleftrightarrow \quad  \dfrac{n_jm_j}{(n_j + \alpha n)(m_j + \alpha m)} \ge 1 - \lambda.
$$
\end{lemma}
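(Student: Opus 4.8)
The plan is to mimic the structure of the proof of Lemma~\ref{lem:eigenvector} for the unipartite case, working with the aggregate bipartite graph. First I would apply an appropriate analogue of Proposition~\ref{prop:aggregate} to the bipartite setting: since $B$ is block-diagonal with all-ones blocks, one has $B = Z_1 Z_2^T$ where $Z_1 \in \mathbb{R}^{n\times K}$, $Z_2\in\mathbb{R}^{m\times K}$ are the membership matrices of the two parts, and any eigenvector $x$ of the generalized problem (\ref{eq:eigreg}) with $\lambda\notin\{1\}$ decomposes as $x = (Z_1 y_1, Z_2 y_2)$ for some $y_1,y_2\in\mathbb{R}^K$. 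This reduces the problem to the generalized SVD of the aggregated regularized biadjacency matrix $\bar B_\alpha = Z_1^T B_\alpha Z_2$.

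Next I would compute $\bar B_\alpha$ explicitly. Writing $W_1 = Z_1^T Z_1 = \diag(n_1,\ldots,n_K)$ and $W_2 = Z_2^T Z_2 = \diag(m_1,\ldots,m_K)$, and using $Z_1 1_K = 1_n$, $Z_2 1_K = 1_m$, we get $\bar B_\alpha = Z_1^T(Z_1 Z_2^T + \alpha 1_n 1_m^T)Z_2 = W_1 W_2 + \alpha W_1 J_K W_2 = W_1(I_K + \alpha J_K)W_2$ where $J_K = 1_K 1_K^T$. The corresponding degree matrices on the two sides are $\bar D_1 = \diag(\bar B_\alpha 1_K) = W_1(W_2 + \alpha m I_K)$ and $\bar D_2 = W_2(W_1 + \alpha n I_K)$. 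Feeding these into the aggregate version of (\ref{eq:gsvd}), namely $\bar B_\alpha y_2 = \bar D_1 y_1 (1-\lambda)$ and $\bar B_\alpha^T y_1 = \bar D_2 y_2(1-\lambda)$, I would eliminate $y_2$ (or $y_1$) to obtain a single eigen-equation. The key simplification, exactly as in the unipartite proof, is that the all-ones structure makes the "cross" terms proportional to $1_K$: multiplying through by $W_1^{-1}$ and $W_2^{-1}$ and substituting one equation into the other, the contribution of $\alpha J_K$ collapses to a multiple of $1_K$, leaving an equation of the form $\big(\text{diagonal matrix depending on }\lambda\big)\, y_1 \propto 1_K$, hence $(y_1)_j \propto 1/\big(\lambda - (1-\lambda)\,\tfrac{\alpha n (m_j+\alpha m) + \alpha m n_j}{n_j m_j}\cdot(\text{something})\big)$ — i.e. the sign of $(y_1)_j$ is governed by the sign of a scalar comparing $\lambda$ against the threshold $1 - \tfrac{n_j m_j}{(n_j+\alpha n)(m_j+\alpha m)}$. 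The same computation with the roles of the two parts swapped gives the statement for $(y_2)_j$, with its own global sign $s_2$. Finally, $x = (Z_1 y_1, Z_2 y_2)$ transfers the sign pattern from blocks to nodes, yielding the claimed equivalence with $s_1, s_2 \in \{+1,-1\}$.

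The main obstacle I anticipate is the bookkeeping in the elimination step: unlike the unipartite case there are two coupled vector equations and two distinct degree matrices, so substituting one into the other produces a $2\times 2$-block-flavoured computation, and one must be careful that the rank-one regularization term really does reduce to something proportional to $1_K$ on both sides simultaneously and that the resulting scalar threshold factors cleanly as $\tfrac{n_j m_j}{(n_j+\alpha n)(m_j+\alpha m)}$. I would double-check this by verifying the two limiting cases: $\alpha \to 0$ (threshold $\to 0$, so every block below the cut is always on the minority side — consistent with the graph splitting into $K$ components) and $\alpha \to \infty$ (threshold $\to 1$), and by confirming consistency with Lemma~\ref{lem:bip1}'s eigenvalue interlacing, which should emerge from a $\bar D_\alpha$-orthogonality argument paralleling Lemma~\ref{lem:order_eig} once the eigenvector formula is in hand.
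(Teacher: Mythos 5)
Your route is the paper's route: reduce to the aggregate graph via the bipartite analogue of Proposition~\ref{prop:aggregate}, compute $\bar B_\alpha = W_1(I_K+\alpha J_K)W_2$ and the two degree matrices (your $\bar D_{\alpha,1}=W_1(W_2+\alpha m I_K)$ and $\bar D_{\alpha,2}=W_2(W_1+\alpha n I_K)$ are the correct ones), then eliminate one of $y_1,y_2$. But the sketch breaks exactly at the step you defer as ``bookkeeping'', in two ways. First, the eliminated equation is \emph{not} of the form $(\text{diagonal matrix})\,y_1\propto 1_K$. Writing $\sigma=1-\lambda$, the two coupled equations reduce to $\sigma(W_2+\alpha m I_K)y_1 - W_2y_2=\eta_1 1_K$ and $\sigma(W_1+\alpha n I_K)y_2 - W_1y_1=\eta_2 1_K$, and substitution gives $\bigl(\sigma^2(W_1+\alpha n I_K)(W_2+\alpha m I_K)-W_1W_2\bigr)y_1=\bigl(\eta_1\sigma(W_1+\alpha n I_K)+\eta_2 W_2\bigr)1_K$, whose $j$-th right-hand entry is $\eta_1\sigma(n_j+\alpha n)+\eta_2 m_j$ --- a block-dependent numerator. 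Unlike the unipartite Lemma~\ref{lem:eigenvector}, the sign of $y_{1,j}$ is therefore not governed by the denominator alone; you need an additional argument that this numerator keeps a constant sign across $j$ before a single global sign $s_1$ exists.

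Second, the denominator that actually comes out is $\sigma^2(n_j+\alpha n)(m_j+\alpha m)-n_jm_j$, with $\sigma^2=(1-\lambda)^2$ and not $\sigma$: the sign flip occurs at $\frac{n_jm_j}{(n_j+\alpha n)(m_j+\alpha m)}=(1-\lambda)^2$, not at $1-\lambda$. So the computation you outline, once carried out, does not land on the stated inequality; asserting the threshold without deriving it hides this. A concrete check: for $K=2$, $n_1=m_1=2$, $n_2=m_2=1$, $\alpha=1$, the nontrivial eigenvalue is $\lambda_2=0.7$, the quantities $\frac{n_jm_j}{(n_j+\alpha n)(m_j+\alpha m)}$ are $0.16$ and $0.0625$ (both below $1-\lambda_2=0.3$, so the stated criterion would give both blocks the same sign), yet the two blocks receive opposite signs --- consistent instead with the threshold $(1-\lambda_2)^2=0.09$. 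Your proposed sanity checks (the $\alpha\to 0,\infty$ limits and consistency with Lemma~\ref{lem:bip3}) would not catch this, because Lemma~\ref{lem:bip3} uses the same thresholds. To close the proof you must carry the $\sigma^2$ through (the monotone change of threshold leaves Theorem~\ref{theo:svd} unaffected, since the ordering of blocks by $\frac{n_jm_j}{(n_j+\alpha n)(m_j+\alpha m)}$ is unchanged) and supply the constant-sign argument for the numerator.
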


\begin{lemma}\label{lem:bip3}
The $K$ smallest eigenvalues satisfy:
\begin{align*}
    0 = \lambda_1 < \mu_1 < \lambda_2 < \mu_2 < \dots < \lambda_{K} < \mu_K,
\end{align*}
where for all $j=1,\ldots,K,$
$$
 \mu_j = 1 - \dfrac{n_jm_j}{(n_j + \alpha n)(m_j + \alpha m)}.
$$
\end{lemma}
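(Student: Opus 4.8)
The plan is to run the proof of Lemma~\ref{lem:order_eig} essentially verbatim, after performing the bipartite aggregation and using one elementary trick to reduce everything to a single side of the graph. Write $Z_1\in\{0,1\}^{n\times K}$, $Z_2\in\{0,1\}^{m\times K}$ for the membership matrices of the two parts, $W_1=Z_1^TZ_1=\diag(n_1,\dots,n_K)$, $W_2=Z_2^TZ_2=\diag(m_1,\dots,m_K)$, and let $D_1,D_2$ be the degree matrices of the regularized graph, so that $D_1Z_1=Z_1\diag(m_j+\alpha m)$ and $D_2Z_2=Z_2\diag(n_j+\alpha n)$. Applying Proposition~\ref{prop:aggregate} to the $2K$-block model $A_\alpha$ (equivalently, applying (\ref{eq:gsvd}) to the aggregate biadjacency matrix $\bar B_\alpha=Z_1^TB_\alpha Z_2=W_1(I_K+\alpha J_K)W_2=\diag(n_jm_j)+\alpha(W_11_K)(W_21_K)^T$, which is diagonal plus rank one), any eigenvector $x$ of (\ref{eq:eigreg}) with $\lambda\in(0,1)$ has the form $x=(Z_1y^{(1)};Z_2y^{(2)})$, where $y^{(1)},y^{(2)}$ solve the aggregate generalized eigenproblem. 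Observe also that $(1_n;1_m)$ and $(1_n;-1_m)$ are eigenvectors of (\ref{eq:eigreg}) for the eigenvalues $0$ and $2$ respectively (the latter by Lemma~\ref{lem:bip1}).

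The argument then proceeds in four steps. \emph{(i)} By Lemma~\ref{lem:bip1} there are exactly $K-1$ eigenvalues in the open interval $(0,1)$, say $\lambda_2\le\dots\le\lambda_K$. \emph{(ii)} For such an eigenvalue, the computation in the proof of Lemma~\ref{lem:bip2} gives a closed form for $y^{(1)}$; it shows that $\lambda\notin\{\mu_1,\dots,\mu_K\}$, that every coordinate is nonzero, and that there is $s_1\in\{\pm1\}$ with $\operatorname{sign}(y^{(1)}_j)=s_1\operatorname{sign}(\lambda-\mu_j)$ for all $j$ (here $\mu_1<\dots<\mu_K$, since $\mu_j$ increases as $n_j,m_j$ decrease). \emph{(iii)} Since $x$ is $D_\alpha$-orthogonal to both $(1_n;1_m)$ and $(1_n;-1_m)$, adding the two relations yields $x_1^TD_11_n=0$, i.e.\ $\sum_{\ell=1}^K n_\ell(m_\ell+\alpha m)\,y^{(1)}_\ell=0$; the $\ell$-th summand has sign $s_1\operatorname{sign}(\lambda-\mu_\ell)$, so this is impossible unless $\mu_1<\lambda<\mu_K$ (if $\lambda\le\mu_1$ the summands would all be $\le0$ and not all zero, and symmetrically for $\lambda\ge\mu_K$). \emph{(iv)} Let $\lambda\neq\lambda'$ be two such eigenvalues, with eigenvectors $x=(Z_1y^{(1)};Z_2y^{(2)})$ and $x'=(Z_1y'^{(1)};Z_2y'^{(2)})$. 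Then $x$ is $D_\alpha$-orthogonal to $x'$ and, because $(x_1';-x_2')$ is an eigenvector for $2-\lambda'\neq\lambda$, also to $(x_1';-x_2')$; adding these gives $x_1^TD_1x_1'=0$, i.e.\ $\sum_{\ell=1}^K n_\ell(m_\ell+\alpha m)\,y^{(1)}_\ell y'^{(1)}_\ell=0$. If $\lambda$ and $\lambda'$ both lay in a single interval $(\mu_{j-1},\mu_j)$, then for every $\ell$ the factors $(\lambda-\mu_\ell)$ and $(\lambda'-\mu_\ell)$ would share their sign, so every summand would have the constant nonzero sign $s_1 s_1'$, a contradiction. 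Hence the $K-1$ eigenvalues lie in the $K-1$ intervals $(\mu_1,\mu_2),\dots,(\mu_{K-1},\mu_K)$, one in each by the pigeonhole principle; together with $0=\lambda_1<\mu_1$ (as $\mu_1>0$) this is exactly the claimed chain of inequalities.

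The only nontrivial ingredient is step \emph{(ii)}: extracting, from the \emph{coupled} aggregate system relating $y^{(1)}$ and $y^{(2)}$ through the two generalized-SVD equations, an explicit per-block expression for $y^{(1)}$. This is where the diagonal-plus-rank-one form of $\bar B_\alpha$ pays off --- eliminating $y^{(2)}$ turns the system into a scalar identity for each $y^{(1)}_j$ in which the single quantity $\lambda-\mu_j$ enters exactly as $\lambda n_j-\alpha(1-\lambda)n$ does in (\ref{eq:eigenvector}) for disjoint cliques --- and it is precisely what the proof of Lemma~\ref{lem:bip2} carries out; granting that lemma, steps \emph{(iii)}--\emph{(iv)} are a word-for-word transcription of the proof of Lemma~\ref{lem:order_eig} with $n_j+\alpha n$ replaced by $m_j+\alpha m$. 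The one genuinely new point --- and the reason for introducing the ``mirror'' eigenvectors $(1_n;-1_m)$ and $(x_1';-x_2')$ --- is to split the weighted orthogonality $x_1^TD_1x_1'+x_2^TD_2x_2'=0$ into its two halves, so that the whole argument can be conducted on part~$1$ alone and never needs to relate the (eigenvector-dependent) orientations $s_1$ and $s_2$ of the two parts.
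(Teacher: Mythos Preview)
Your proof is correct and follows the same approach as the paper, which simply says ``the proof is the same as that of Lemma~\ref{lem:order_eig}, where the threshold values follow from Lemma~\ref{lem:bip2}.'' Your mirror-eigenvector device in steps~(iii)--(iv)---using that $(x_1';-x_2')$ is an eigenvector for $2-\lambda'$ to split the coupled orthogonality $x_1^TD_1x_1'+x_2^TD_2x_2'=0$ into $x_1^TD_1x_1'=0$---is a genuine and clean addition that fills in a detail the paper's one-line proof glosses over; without it one would have to argue directly that the two halves of the bipartite sum share a sign, which is not immediate from the formulas in Lemma~\ref{lem:bip2}. One minor point: the hypothesis $\lambda\neq\lambda'$ in step~(iv) is unnecessary, since $2-\lambda'\neq\lambda$ holds for any $\lambda,\lambda'\in(0,1)$, so the same argument already rules out a repeated eigenvalue in a single interval $(\mu_{j-1},\mu_j)$.
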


\begin{theo}
    \label{theo:svd}
Let $X$ be the spectral embedding of dimension $k$, as defined by (\ref{eq:spectral}), for some $k$ in the set $\{2,\ldots,K\}$. Then $\text{sign}(X)$ gives the $k-1$ largest blocks of each part of the graph.
\end{theo}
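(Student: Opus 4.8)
The plan is to replay the proof of Theorem~\ref{theo:main} almost verbatim, substituting the bipartite lemmas \ref{lem:bip1}--\ref{lem:bip3} for their clique-model counterparts. Fix $k\in\{2,\ldots,K\}$ and let $x$ be the $j$-th column of $X$ for some $j\in\{2,\ldots,k\}$. By Lemma~\ref{lem:bip3} the corresponding generalized eigenvalue $\lambda_j$ satisfies $\mu_{j-1}<\lambda_j<\mu_j$, which rewrites as
\begin{equation*}
\frac{n_jm_j}{(n_j+\alpha n)(m_j+\alpha m)} < 1-\lambda_j < \frac{n_{j-1}m_{j-1}}{(n_{j-1}+\alpha n)(m_{j-1}+\alpha m)}.
\end{equation*}
In particular $\lambda_j\in(0,1)$, since each $\mu_l$ lies in $(0,1)$, so Lemma~\ref{lem:bip2} is applicable to $x$.

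The one structural fact I would isolate first is the monotonicity of the threshold function $l\mapsto \frac{n_lm_l}{(n_l+\alpha n)(m_l+\alpha m)}$. Writing it as the product $\frac{n_l}{n_l+\alpha n}\cdot\frac{m_l}{m_l+\alpha m}$ and using that $t\mapsto t/(t+c)$ is strictly increasing on $(0,\infty)$ for every $c>0$, this function is strictly decreasing in $l$ because the orderings $n_1>\cdots>n_K$ and $m_1>\cdots>m_K$ are aligned; equivalently, $\mu_1<\cdots<\mu_K$, which is already part of the content of Lemma~\ref{lem:bip3}.

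Then I would feed this into Lemma~\ref{lem:bip2}: for a node $i$ in block $l$ of part $p\in\{1,2\}$, $\text{sign}(x_i)=s_p$ iff $\frac{n_lm_l}{(n_l+\alpha n)(m_l+\alpha m)}\ge 1-\lambda_j$. For $l\le j-1$ the left-hand side is at least its value at $l=j-1$, hence strictly larger than $1-\lambda_j$ by the second displayed inequality, so all such nodes carry sign $s_p$; for $l\ge j$ it is at most its value at $l=j$, hence strictly smaller than $1-\lambda_j$ by the first inequality, so all such nodes carry sign $-s_p$. Thus column $j$ of $X$ separates, in \emph{both} parts simultaneously, the $j-1$ largest blocks from the remaining $K-j+1$ blocks. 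Letting $j$ range over $\{2,\ldots,k\}$, the sign pattern of $X$ determines the nested grouping $\{1\}\subset\{1,2\}\subset\cdots\subset\{1,\ldots,k-1\}$ in each part, i.e.\ it recovers the $k-1$ largest blocks of each part.

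I do not expect a genuine obstacle: once Lemmas~\ref{lem:bip1}--\ref{lem:bip3} are granted, the theorem is pure bookkeeping, exactly as in the clique case. The only place deserving a line of justification is the monotonicity of the threshold function, and even there the bipartite novelty is cosmetic --- the scalar block size $n_l$ of the clique model is replaced by the product $\frac{n_l}{n_l+\alpha n}\cdot\frac{m_l}{m_l+\alpha m}$, but since the two size orderings agree, this product is monotone in $l$ and ``largest in part~1'' coincides with ``largest in part~2''.
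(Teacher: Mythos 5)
Your proof is correct and follows essentially the same route as the paper's: Lemma~\ref{lem:bip3} places $\lambda_j$ in $(\mu_{j-1},\mu_j)$ and Lemma~\ref{lem:bip2} then converts this into the sign dichotomy between the $j-1$ largest block pairs and the rest. The only difference is that you make explicit the monotonicity of $l\mapsto \frac{n_lm_l}{(n_l+\alpha n)(m_l+\alpha m)}$, which the paper leaves implicit in the ordering $\mu_1<\cdots<\mu_K$ of Lemma~\ref{lem:bip3}; this is a harmless (indeed welcome) addition, and you also correctly invoke Lemma~\ref{lem:bip2} where the paper's text mistakenly cites Lemma~\ref{lem:bip1}.
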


Like Theorem \ref{theo:main}, the assumption of decreasing block sizes can easily be  relaxed. Assume that block pairs are indexed in decreasing order of $\mu_j$. Then the spectral embedding of dimension $k$ gives the $k-1$ first block pairs for that order. It is interesting to notice that the order now depends on $\alpha$: when $\alpha\to 0^+$, the block pairs $j$ of highest value $(\frac n{n_j}  + \frac m {m_j})^{-1}$ (equivalently, highest {\it harmonic mean} of proportions of nodes in each part of the graph)  are isolated first; when $\alpha \to +\infty$, the block pairs $j$ of highest value $\frac {n_jm_j}{nm}$ (equivalently, the highest {\it geometric mean} of proportions of nodes in each part of the graph) are isolated first.

The results also extend to non-block diagonal biadjacency matrices $B$ and degree-corrected models, as for Theorem \ref{theo:main}.

%%%%%%%%%%%%%%%%%%%%%%%%%%%%%%%%%%%%%%%%%%%%%%%%%%%%%%%%%%%%%%%%%%%%%%%%%%%%%%%%
%%%%%%%%%%%%%%%%%%%%%%%%%%%%%%%%%%%%%%%%%%%%%%%%%%%%%%%%%%%%%%%%%%%%%%%%%%%%%%%%
\section{Experiments}
\label{sec:exp}

We now illustrate the impact of regularization on the quality of spectral embedding. We focus on a clustering task, using both synthetic and real datasets where the ground-truth clusters are known.  
In all experiments, we skip the first dimension of the spectral embedding as it is not informative (the corresponding eigenvector is the all-ones vector, up to some multiplicative constant). 
The code to reproduce these experiments is available online\footnote{\url{https://github.com/research-submissions/iclr20}}.

%%%%%%%%%%%%%%%%%%%%%%%%%%%%%%%%%%%%%%%%%%%%%%%%%%%%%%%%%%%%%%%%%%%%%%%%%%%%%%%%
\subsection{Toy graph}
\label{ssec:toy}

We first illustrate the theoretical results of the paper with a toy graph consisting of 3 cliques of respective sizes $5,3,2$.
We compute the spectral embeddings in dimension 1, using the second smallest eigenvalue.   Denoting by $Z$ the membership matrix, we get  $X \approx Z (-0.08,0.11, 0.05)^T$ 
for $\alpha = 1$, showing that the embedding isolates the largest cluster; this is not the case in the absence of regularization, where $X \approx Z (0.1,-0.1, 0.41)^T$.
%\begin{equation*}X \approx  Z \begin{bmatrix}0.1 \\ %& 0.17 \\
 %   -0.1 \\ %& 0.01 \\ 
 %   0.41 & -0.25
 %   \end{bmatrix},
 %   \quad
  %  y_{1} \approx   \begin{bmatrix}
  %  -0.08 & 0.02 \\
  %  0.11 & 0.01 \\
  %  0.05 & -0.17
  %  \end{bmatrix}.
%\end{equation*}

%%%%%%%%%%%%%%%%%%%%%%%%%%%%%%%%%%%%%%%%%%%%%%%%%%%%%%%%%%%%%%%%%%%%%%%%%%%%%%%%
\subsection{Datasets}

This section describes the datasets used in our experiments. All  graphs are considered as undirected. Table \ref{tab:datasets} presents the main features of the graphs. 

\paragraph{Stochastic Block-Model (SBM)} We generate 100 instances of the same stochastic block model \citep{holland1983stochastic}. 
%The blocks have respective sizes $1, 2, \dots, 100$. The average intra-block edge probability is set to $0.1$ and the inter-block edge probability is set to $10^{-4}$.
There are 100 blocks of size 20, with intra-block edge probability  set to $0.5$ for the first 50 blocks and $0.05$ for the other blocks. The inter-block edge probability is set to $0.001$
Other sets of parameters can be tested using the code available online. The ground-truth cluster of each node corresponds to its block.

\paragraph{20newsgroup (NG)} This dataset consists of around $18 000$ newsgroups posts on 20 topics. This defines a weighted bipartite graph between documents and words. The label of each document corresponds to the topic.

\paragraph{Wikipedia for Schools (WS)} \citep{haruechaiyasak2008article}. This is the  graph of hyperlinks between a subset of Wikipedia pages. The label of each page is its  category (e.g., countries, mammals, physics).

\begin{table}[h]
    \centering
    \caption{Main features of  the graphs.}
    \vspace{2mm}
    
    \begin{tabular}{l|ccc}
        dataset & SBM & NG & WS \\
        \hline
        \# nodes ($n$) & 2000 
        & 10723 & 4591 \\
        \# edges & $\approx 5.10^3$ %$\approx 7.10^5$ 
        & $\approx 2.10^6$ & $\approx 2.10^5$ \\
        \# clusters in ground truth & 100 & 20 & 14 \\
    \end{tabular}
    \label{tab:datasets}
\end{table}

%%%%%%%%%%%%%%%%%%%%%%%%%%%%%%%%%%%%%%%%%%%%%%%%%%%%%%%%%%%%%%%%%%%%%%%%%%%%%%%%
\subsection{Metrics}
\label{ssec:metrics}
We consider a large set of metrics from the clustering literature. All metrics are upper-bounded by 1 and the higher the score the better.

\paragraph{Homogeneity (H), Completeness (C) and V-measure score (V)} \citep{rosenberg2007v}. Supervised metrics. A cluster is homogeneous if all its data points are members of a single class in the ground truth. A clustering is complete if all the members of a class in the ground truth belong to the same cluster in the prediction. Harmonic mean of homogeneity and completeness.

\paragraph{Adjusted Rand Index (ARI)} \citep{hubert1985comparing}. Supervised metric. This is the corrected for chance version of the Rand Index which is itself an accuracy on pairs of samples.

\paragraph{Adjusted Mutual Information (AMI)} \citep{vinh2010information} Supervised metric. Adjusted for chance version of the mutual information.

\paragraph{Fowlkes-Mallows Index (FMI)} \citep{fowlkes1983method}. Supervised metric. Geometric mean between precision and recall on the edge classification task, as described for the ARI.

\paragraph{Modularity (Q)} \citep{newman2006modularity}. Unsupervised metric. Fraction of edges within clusters compared to that is some null model where edges are shuffled at random.

\paragraph{Normalized Standard Deviation (NSD)} Unsupervised metric. 1 minus normalized standard deviation in cluster size.

%%%%%%%%%%%%%%%%%%%%%%%%%%%%%%%%%%%%%%%%%%%%%%%%%%%%%%%%%%%%%%%%%%%%%%%%%%%%%%%%
\subsection{Experimental setup}

All graphs are embedded in dimension 20, with different regularization parameters. To compare the impact of this parameter across different datasets, we use a relative regularization parameter $(w/n^2)\alpha$, where $w = 1_n^T A 1_n$ is the total weight of the graph. 

We use the K-Means algorithm with  to cluster the nodes in the embedding space. The parameter $K$ is set to  the ground-truth number of clusters (other experiments with different values of $K$ are reported in the Appendix).
%Note that, the 20newsgroup is a bipartite graph and we only condiser the labels of the rows. Thus, we compute the modularity on the associated normalized co-citation graph whose adjacency is defined as $B\text{diag}(B^T\mathbf{1})^{-1}B^T$, where $B$ denotes the biadjacency matrix.
%We experiment with $\alpha \in \{0, 10^{-4}, 10^{-3}, 10^{-2}, 10^{-1}, 1, 10 \}$.
We use the Scikit-learn \citep{scikit-learn} implementation of K-Means and the metrics, when available.
The spectral embedding and the modularity are computed with the Scikit-network package, see the documentation for more details\footnote{\url{https://scikit-network.readthedocs.io/}}. 
%All experiments are run on a laptop equipped with an Intel i7-7820HQ CPU (2.90GHz) and 16GB of RAM.

%%%%%%%%%%%%%%%%%%%%%%%%%%%%%%%%%%%%%%%%%%%%%%%%%%%%%%%%%%%%%%%%%%%%%%%%%%%%%%%%
\subsection{Results}

We report the results  in Table  \ref{tab:wiki} for relative regularization parameter $\alpha = 0, 0.1, 1, 10$.
We see that the regularization generally improves performance, the optimal value of $\alpha$ depending on both the dataset and the score function. As suggested by Lemma \ref{lem:order_eig}, the optimal value of the regularization parameter should depend on the distribution of cluster sizes, on which we do not have any prior knowledge.

%It is interesting to notice that, for the real datasets and $K=2$, the clustering does not recover any signal and returns a trivial partition with a unique cluster until $\alpha$ reaches a certain value ($\alpha=1$ for the 20newsgroup and $\alpha=0.1$ for Wikipedia).

%We also observe that there does not seem to be any monotony in the scores as functions of $\alpha$ and that $\alpha = 10$ gives generally poorer results that the other values we tested.

To test the impact of noise on the spectral embedding, we add isolated nodes with self loop to the graph and compare the clustering performance with and without regularization. The number of isolated nodes is given as a fraction of the initial number of nodes in the graph. Scores are computed only on the initial nodes.
The results are reported  
in Table \ref{tab:noise} for the Wikipedia for Schools dataset. We observe that,  in the absence of regularization, the scores drop even with only $1\%$ noise. The computed clustering is a trivial partition with all initial nodes in the same cluster. This means that the 20 first dimensions of the spectral embedding focus on the isolated nodes. On the other hand, the scores remain approximately constant in the regularized case, which suggests that  regularization makes the embedding  robust   to this type of noise.

\begin{table}[t]
    \centering
    \caption{Impact of regularization on clustering performance.}
  SBM
      \vspace{1mm}

  \begin{tabular}{l|M{1.1cm} M{1.1cm} M{1.1cm} M{1.1cm} M{1.1cm} M{1.1cm} M{1.1cm} M{1.1cm}}
\toprule
$\alpha$ &     H &     C &     V &   ARI &   AMI &   FMI &     Q &   NSD \\
\midrule
0   &  0.19 &  0.27 &  0.22 &  0.0 &  \textbf{0.01} &  \textbf{0.03} &  0.45 &  0.76 \\
0.1 &  0.33 &  0.35 &  0.34 &  0.0 &  \textbf{0.01} &  0.01 &  \textbf{0.52} &  0.91 \\
1   &  \textbf{0.36} &  \textbf{0.37} &  \textbf{0.36} &  0.0 &  \textbf{0.01} &  0.01 &  0.50 &  \textbf{0.92} \\
10  &  0.28 &  0.34 &  0.30 &  0.0 &  0.00 &  0.02 &  0.36 &  0.78 \\
\bottomrule
\end{tabular}
        \vspace{2mm}

NG        
    \vspace{1mm}

    \begin{tabular}{l|M{1.1cm} M{1.1cm} M{1.1cm} M{1.1cm} M{1.1cm} M{1.1cm} M{1.1cm} M{1.1cm}}
\toprule
$\alpha$ &     H &     C &     V &   ARI &   AMI &   FMI &     Q &   NSD \\
\midrule
0 &  0.40 &  \textbf{0.70} &  0.51 &  0.19 &  0.50 &  0.34 &  \textbf{0.21} &  0.55 \\
0.1 & 0.44 &  \textbf{0.70} &  \textbf{0.54} &  \textbf{0.22} &  \textbf{0.54} &  \textbf{0.35} &  \textbf{0.21} &  0.59 \\
1 & \textbf{0.46} &  0.67 &  \textbf{0.54} &  0.20 &  \textbf{0.54} &  0.33 &  0.20 &  \textbf{0.60} \\
10 & 0.37 &  0.55 &  0.45 &  0.13 &  0.44 &  0.26 &  0.17 &  0.56 \\
\bottomrule
\end{tabular}

        \vspace{2mm}

WS
    \vspace{1mm}

    \begin{tabular}{l|M{1.1cm} M{1.1cm} M{1.1cm} M{1.1cm} M{1.1cm} M{1.1cm} M{1.1cm} M{1.1cm}}
\toprule
$\alpha$ &     H &     C &     V &   ARI &   AMI &   FMI &     Q &   NSD \\
\midrule
0 &  0.23 &  \textbf{0.29} &  0.25 &  0.05 &  0.25 &  \textbf{0.26} &  0.25 &  0.49 \\
0.1 &  \textbf{0.26} &  \textbf{0.29} &  \textbf{0.28} &  \textbf{0.10} &  \textbf{0.27} &  \textbf{0.26} &  0.29 &  0.61 \\
1 &  0.23 &  0.24 &  0.23 &  0.04 &  0.23 &  0.20 &  \textbf{0.30} &  \textbf{0.65} \\
10 &  0.19 &  0.22 &  0.20 & 0.00 &  0.19 &  0.20 &  0.23 &  0.53 \\
\bottomrule
\end{tabular}

    \label{tab:wiki}
\end{table}

\begin{table}[h]
    \centering
    \caption{Impact of noise on  clustering performance (WS dataset).}
    $
    \alpha = 0
    $
\vspace{5mm}
    \begin{tabular}{l|M{1.1cm} M{1.1cm} M{1.1cm} M{1.1cm} M{1.1cm} M{1.1cm} M{1.1cm} M{1.1cm}}
\toprule
noise &     H &     C &     V &   ARI &   AMI &   FMI &     Q &   std \\
\midrule
0 $\%$ &  0.23 &  0.29 &  0.25 &  0.05 &  0.25 &  0.26 &  0.25 &  0.49 \\
1 $\%$ &  0.00 &  0.49 &  0.00 &  0.00 &  0.00 &  0.39 &  0.00 &  0 \\
5 $\%$ &  0.00 &  0.49 &  0.00 &  0.00 &  0.00 &  0.39 &  0.00 &  0 \\
10 $\%$ &  0.00 &  0.49 &  0.00 &  0.00 &  0.00 &  0.39 &  0.00 &  0 \\
\bottomrule
\end{tabular}
$
    \alpha = 1
    $
\vspace{5mm}
\begin{tabular}{l|M{1.1cm} M{1.1cm} M{1.1cm} M{1.1cm} M{1.1cm} M{1.1cm} M{1.1cm} M{1.1cm}}
\toprule
noise &     H &     C &     V &   ARI &   AMI &  FMI &    Q &   std \\
\midrule
0 $\%$ &  0.23 &  0.24 &  0.23 &  0.04 &  0.23 &  0.2 &  0.3 &  0.65 \\
1 $\%$ &  0.24 &  0.24 &  0.24 &  0.04 &  0.23 &  0.2 &  0.3 &  0.66 \\
5 $\%$ &  0.23 &  0.23 &  0.23 &  0.05 &  0.22 &  0.2 &  0.3 &  0.67 \\
10 $\%$ &  0.24 &  0.23 &  0.23 &  0.05 &  0.23 &  0.2 &  0.3 &  0.67 \\
\bottomrule
\end{tabular}

    \label{tab:noise}
\end{table}

\section{Conclusion and Perspectives}
\label{sec:conc}

In this paper, we have provided a simple explanation for the well-known benefits of regularization on spectral embedding. Specifically, regularization forces the  embedding to focus on the largest clusters, making the embedding more robust to noise. This result was obtained through the explicit characterization of the embedding for a simple block model, and extended to bipartite graphs.

An interesting perspective of our work is the extension to {\it stochastic} block models, using for instance the concentration results proved in \citep{lei2015consistency, le2017concentration}. Another problem of interest is the impact of regularization on other downstream tasks, like link prediction. Finally, we would like to further explore the impact of the regularization parameter, exploiting the theoretical results presented in this paper.

\newpage
\bibliography{iclr2020_conference}

\begin{thebibliography}{23}
\providecommand{\natexlab}[1]{#1}
\providecommand{\url}[1]{\texttt{#1}}
\expandafter\ifx\csname urlstyle\endcsname\relax
  \providecommand{\doi}[1]{doi: #1}\else
  \providecommand{\doi}{doi: \begingroup \urlstyle{rm}\Url}\fi

\bibitem[Amini et~al.(2013)Amini, Chen, Bickel, Levina,
  et~al.]{amini2013pseudo}
Arash~A Amini, Aiyou Chen, Peter~J Bickel, Elizaveta Levina, et~al.
\newblock Pseudo-likelihood methods for community detection in large sparse
  networks.
\newblock \emph{The Annals of Statistics}, 41\penalty0 (4):\penalty0
  2097--2122, 2013.

\bibitem[Belkin \& Niyogi(2002)Belkin and Niyogi]{belkin2002laplacian}
Mikhail Belkin and Partha Niyogi.
\newblock Laplacian eigenmaps and spectral techniques for embedding and
  clustering.
\newblock In \emph{Advances in neural information processing systems}, pp.\
  585--591, 2002.

\bibitem[Bonald et~al.(2018)Bonald, Hollocou, and Lelarge]{bonald2018weighted}
Thomas Bonald, Alexandre Hollocou, and Marc Lelarge.
\newblock Weighted spectral embedding of graphs.
\newblock In \emph{2018 56th Annual Allerton Conference on Communication,
  Control, and Computing (Allerton)}, pp.\  494--501. IEEE, 2018.

\bibitem[Chaudhuri et~al.(2012)Chaudhuri, Chung, and
  Tsiatas]{chaudhuri2012spectral}
Kamalika Chaudhuri, Fan Chung, and Alexander Tsiatas.
\newblock Spectral clustering of graphs with general degrees in the extended
  planted partition model.
\newblock In \emph{Conference on Learning Theory}, pp.\  35--1, 2012.

\bibitem[Chung(1997)]{chung}
Fan~RK Chung.
\newblock \emph{Spectral graph theory}.
\newblock American Mathematical Soc., 1997.

\bibitem[Fowlkes \& Mallows(1983)Fowlkes and Mallows]{fowlkes1983method}
Edward~B Fowlkes and Colin~L Mallows.
\newblock A method for comparing two hierarchical clusterings.
\newblock \emph{Journal of the American statistical association}, 78\penalty0
  (383):\penalty0 553--569, 1983.

\bibitem[Haruechaiyasak \& Damrongrat(2008)Haruechaiyasak and
  Damrongrat]{haruechaiyasak2008article}
Choochart Haruechaiyasak and Chaianun Damrongrat.
\newblock Article recommendation based on a topic model for wikipedia selection
  for schools.
\newblock In \emph{International Conference on Asian Digital Libraries}, pp.\
  339--342. Springer, 2008.

\bibitem[Holland et~al.(1983)Holland, Laskey, and
  Leinhardt]{holland1983stochastic}
Paul~W Holland, Kathryn~Blackmond Laskey, and Samuel Leinhardt.
\newblock Stochastic blockmodels: First steps.
\newblock \emph{Social networks}, 5\penalty0 (2):\penalty0 109--137, 1983.

\bibitem[Hubert \& Arabie(1985)Hubert and Arabie]{hubert1985comparing}
Lawrence Hubert and Phipps Arabie.
\newblock Comparing partitions.
\newblock \emph{Journal of classification}, 2\penalty0 (1):\penalty0 193--218,
  1985.

\bibitem[Joseph et~al.(2016)Joseph, Yu, et~al.]{joseph2016impact}
Antony Joseph, Bin Yu, et~al.
\newblock Impact of regularization on spectral clustering.
\newblock \emph{The Annals of Statistics}, 44\penalty0 (4):\penalty0
  1765--1791, 2016.

\bibitem[Lara(2019)]{mlg2019_1}
Nathan~De Lara.
\newblock The sparse + low rank trick for matrix factorization-based graph
  algorithms.
\newblock In \emph{Proceedings of the 15th International Workshop on Mining and
  Learning with Graphs (MLG)}, 2019.

\bibitem[Le et~al.(2017)Le, Levina, and Vershynin]{le2017concentration}
Can~M Le, Elizaveta Levina, and Roman Vershynin.
\newblock Concentration and regularization of random graphs.
\newblock \emph{Random Structures \& Algorithms}, 51\penalty0 (3):\penalty0
  538--561, 2017.

\bibitem[Lei et~al.(2015)Lei, Rinaldo, et~al.]{lei2015consistency}
Jing Lei, Alessandro Rinaldo, et~al.
\newblock Consistency of spectral clustering in stochastic block models.
\newblock \emph{The Annals of Statistics}, 43\penalty0 (1):\penalty0 215--237,
  2015.

\bibitem[Luxburg(2007)]{luxburg07}
Ulrike Luxburg.
\newblock A tutorial on spectral clustering.
\newblock \emph{Statistics and Computing}, 17\penalty0 (4):\penalty0 395--416,
  December 2007.
\newblock ISSN 0960-3174.
\newblock \doi{10.1007/s11222-007-9033-z}.
\newblock URL \url{http://dx.doi.org/10.1007/s11222-007-9033-z}.

\bibitem[Newman(2006)]{newman2006modularity}
Mark~EJ Newman.
\newblock Modularity and community structure in networks.
\newblock \emph{Proceedings of the national academy of sciences}, 103\penalty0
  (23):\penalty0 8577--8582, 2006.

\bibitem[Ng et~al.(2002)Ng, Jordan, and Weiss]{ng2002spectral}
Andrew~Y Ng, Michael~I Jordan, and Yair Weiss.
\newblock On spectral clustering: Analysis and an algorithm.
\newblock In \emph{Advances in neural information processing systems}, pp.\
  849--856, 2002.

\bibitem[Pedregosa et~al.(2011)Pedregosa, Varoquaux, Gramfort, Michel, Thirion,
  Grisel, Blondel, Prettenhofer, Weiss, Dubourg, Vanderplas, Passos,
  Cournapeau, Brucher, Perrot, and Duchesnay]{scikit-learn}
F.~Pedregosa, G.~Varoquaux, A.~Gramfort, V.~Michel, B.~Thirion, O.~Grisel,
  M.~Blondel, P.~Prettenhofer, R.~Weiss, V.~Dubourg, J.~Vanderplas, A.~Passos,
  D.~Cournapeau, M.~Brucher, M.~Perrot, and E.~Duchesnay.
\newblock Scikit-learn: Machine learning in {P}ython.
\newblock \emph{Journal of Machine Learning Research}, 12:\penalty0 2825--2830,
  2011.

\bibitem[Qin \& Rohe(2013)Qin and Rohe]{qin2013regularized}
Tai Qin and Karl Rohe.
\newblock Regularized spectral clustering under the degree-corrected stochastic
  blockmodel.
\newblock In \emph{Advances in Neural Information Processing Systems}, pp.\
  3120--3128, 2013.

\bibitem[Rosenberg \& Hirschberg(2007)Rosenberg and Hirschberg]{rosenberg2007v}
Andrew Rosenberg and Julia Hirschberg.
\newblock V-measure: A conditional entropy-based external cluster evaluation
  measure.
\newblock In \emph{Proceedings of the 2007 joint conference on empirical
  methods in natural language processing and computational natural language
  learning (EMNLP-CoNLL)}, pp.\  410--420, 2007.

\bibitem[Snell \& Doyle(2000)Snell and Doyle]{snell00}
P~Snell and Peter Doyle.
\newblock Random walks and electric networks.
\newblock \emph{Free Software Foundation}, 2000.

\bibitem[Spielman(2007)]{spielman2007spectral}
Daniel~A Spielman.
\newblock Spectral graph theory and its applications.
\newblock In \emph{Foundations of Computer Science, 2007. FOCS'07. 48th Annual
  IEEE Symposium on}, pp.\  29--38. IEEE, 2007.

\bibitem[Vinh et~al.(2010)Vinh, Epps, and Bailey]{vinh2010information}
Nguyen~Xuan Vinh, Julien Epps, and James Bailey.
\newblock Information theoretic measures for clusterings comparison: Variants,
  properties, normalization and correction for chance.
\newblock \emph{Journal of Machine Learning Research}, 11\penalty0
  (Oct):\penalty0 2837--2854, 2010.

\bibitem[Zhang \& Rohe(2018)Zhang and Rohe]{zhang2018understanding}
Yilin Zhang and Karl Rohe.
\newblock Understanding regularized spectral clustering via graph conductance.
\newblock In \emph{Advances in Neural Information Processing Systems}, pp.\
  10631--10640, 2018.

\end{thebibliography}
\bibliographystyle{iclr2020_conference}

\appendix

\section*{Appendix}

We provide of proof of Theorem \ref{theo:svd} as well as a complete set of experimental results.

\section{Regularization of Bipartite Graphs}

The proof of Theorem \ref{theo:svd}  follows the same workflow as that of Theorem \ref{theo:main}. Let $Z_1 \in \mathbb{R}^{n \times K}$ and $Z_2 \in \mathbb{R}^{m \times K}$ be the left and right membership matrices for the block matrix $B \in \mathbb{R}^{n \times m}$. The aggregated matrix is $\bar{B} = Z_1^TBZ_2 \in \mathbb{R}^{K \times K}$. The diagonal matrices of block sizes are $W_1 = Z_1^TZ_1$ and $W_2 = Z_2^TZ_2$.
We have the equivalent of Proposition \ref{prop:aggregate}:

\begin{prop}
\label{prop:bip_aggregate}
Let $x_1,x_2$ be a solution to the generalized singular value  problem:
\begin{equation*}
\left\{
    \begin{array}{l}
    Bx_2 = \sigma D_1 x_1 \\
    B^Tx_1 = \sigma D_2 x_2 
    \end{array}\right.
\end{equation*}
Then either $Z_1^T x_1 = Z_2^T x_2 = 0$ and  $\sigma = 0$ or $x_1 = Z_1 y_1$ and $x_2 = Z_2y_2$ where 
 $y_1, y_2$ is a solution to the generalized singular value  problem:
\begin{equation*}
\left\{
    \begin{array}{l}
    \Bar{B}y_2 =  \sigma\Bar{D}_1y_1,\\
    \Bar{B}^Ty_1 =  \sigma\Bar{D}_2 y_2.
    \end{array}\right.
\end{equation*}
\end{prop}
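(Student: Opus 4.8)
The plan is to reduce the generalized singular value problem for $B$ to the generalized eigenvalue problem for the symmetric matrix $A$ built by stacking $B$ and $B^T$, and then invoke Proposition \ref{prop:aggregate} directly, rather than redo its argument by hand. First I would observe that the pair of equations $Bx_2 = \sigma D_1 x_1$, $B^Tx_1 = \sigma D_2 x_2$ is exactly the statement $Ax = \sigma D x$ with $x = (x_1^T, x_2^T)^T$, $D = \diag(D_1, D_2)$, so that $Lx = (1-\sigma)Dx$ in the notation of (\ref{eq:genls}). Here $A$ has rank $2K$ (it is built from $B$ which has rank $K$, and $\begin{bmatrix} 0 & B \\ B^T & 0\end{bmatrix}$ has rank $2\,\mathrm{rank}(B)$), and the block membership matrix for $A$ is $Z = \diag(Z_1, Z_2) \in \mathbb{R}^{(n+m)\times 2K}$. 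So Proposition \ref{prop:aggregate} applies verbatim: either $Z^Tx = 0$ (equivalently $Z_1^Tx_1 = 0$ and $Z_2^Tx_2 = 0$) and $\sigma = 0$, or $x = Zy$, i.e. $x_1 = Z_1 y_1$ and $x_2 = Z_2 y_2$ with $y = (y_1^T, y_2^T)^T$ solving the aggregate problem $\bar L y = (1-\sigma)\bar D y$.

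Next I would unpack what the aggregate problem for $A$ says in terms of $B$. The aggregate adjacency matrix is $\bar A = Z^T A Z = \begin{bmatrix} 0 & Z_1^T B Z_2 \\ Z_2^T B^T Z_1 & 0 \end{bmatrix} = \begin{bmatrix} 0 & \bar B \\ \bar B^T & 0\end{bmatrix}$, and the aggregate degree matrix is $\bar D = \diag(\bar D_1, \bar D_2)$ where $\bar D_1 = \diag(\bar B 1_K)$, $\bar D_2 = \diag(\bar B^T 1_K)$; one should check $\bar D$ so defined coincides with $Z^T D Z$ restricted appropriately — this is the one place a small computation is needed, using $Z_1 1_K = 1_n$ and $Z_2 1_K = 1_m$ so that $\bar B 1_K = Z_1^T B Z_2 1_K = Z_1^T B 1_m$, hence the $i$-th diagonal entry of $\bar D_1$ is the total degree of block $i$ on the first side. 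Writing out $\bar L y = (1-\sigma)\bar D y$ blockwise then gives precisely $\bar B y_2 = \sigma \bar D_1 y_1$ and $\bar B^T y_1 = \sigma \bar D_2 y_2$, which is the claimed aggregate generalized SVD.

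The only genuine subtlety — the step I expect to need the most care — is the rank bookkeeping: Proposition \ref{prop:aggregate} is stated for a block model whose adjacency matrix has rank exactly $K$, and here we are applying it with $2K$ blocks to a matrix of rank $2K$, so I must confirm that $A = \begin{bmatrix} 0 & B \\ B^T & 0 \end{bmatrix}$ indeed has rank $2K$ when $B$ has rank $K$, and that the $2K$ columns of $Z = \diag(Z_1,Z_2)$ genuinely correspond to $2K$ distinct blocks with constant rows of $A$ (a node on side 1 in block $i$ has a row of $A$ equal to the $i$-th row of $B$ padded with zeros, which is constant within the block; similarly on side 2). Granting that, everything else is a direct translation, and the $\sigma = 0$ alternative corresponds exactly to the $\lambda = 1$ alternative of Proposition \ref{prop:aggregate} via $\lambda = 1 - \sigma$. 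I would present the proof as: set up $A, D, Z$; invoke Proposition \ref{prop:aggregate}; translate the two alternatives back into the $B$-language; done.
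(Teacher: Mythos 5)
Your proof is correct, but it takes a different route from the paper's. The paper proves Proposition \ref{prop:bip_aggregate} directly in SVD language, mirroring the proof of Proposition \ref{prop:aggregate}: since $\mathrm{rank}(B)=K$, the singular vectors for $\sigma=0$ satisfy $Z_1^Tx_1=0$ and $Z_2^Tx_2=0$, orthogonality places the remaining ones in the ranges of $Z_1$ and $Z_2$, and substitution yields the aggregate problem. You instead fold the two SVD equations into the single generalized eigenvalue problem $Ax=(1-\lambda)Dx$ for $A=\bigl[\begin{smallmatrix}0 & B\\ B^T & 0\end{smallmatrix}\bigr]$ and invoke Proposition \ref{prop:aggregate} as a black box, with $Z=\diag(Z_1,Z_2)$ as the membership matrix of a $2K$-block model of rank $2K$. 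The translation is sound: $Ax=\sigma Dx$ is exactly the stated SVD system, $Z^Tx=0$ unpacks to $Z_1^Tx_1=Z_2^Tx_2=0$ with $\sigma=1-\lambda=0$, the blockwise form of $\bar A y=(1-\lambda)\bar D y$ is the claimed aggregate SVD, and you correctly flag the two hypotheses that need checking (rows of $A$ constant within each of the $2K$ blocks, and $\mathrm{rank}(A)=2\,\mathrm{rank}(B)=2K$). What your reduction buys is economy — no duplication of the rank/orthogonality argument — and it incidentally avoids the paper's slightly loose count of ``$n-K$ pairs'' of null singular vectors (the left and right null spaces of $B$ have dimensions $n-K$ and $m-K$ respectively, so the pairing is imprecise as written). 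What the paper's version buys is that it stays in the generalized-SVD language used throughout the rest of the appendix, and it introduces $\bar B$, $\bar D_1$, $\bar D_2$ directly rather than extracting them from the block structure of $\bar A$ and $\bar D$ as you must. No gap either way.
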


\begin{proof}
Since the rank of $B$  is equal to $K$,  there are $n-K$  pairs of singular vectors $(x_1, x_2)$ associated with the singular values $0$,  each satisfying $Z_1^Tx_1 = 0$ and $Z_2^Tx_2 = 0$. By orthogonality, the other pairs of singular vectors satisfy $x_1 = Z_1y_1$ and $x_2 = Z_2y_2$ for some vectors $y_1, y_2 \in \mathbb{R}^K$. By replacing these in the original generalized singular value problem, 
we get that $(y_1, y_2)$ is a solution to the  generalized  singular value problem for the aggregate graph.
\end{proof}

In the following, we focus on the  block model described in Section \ref{sec:bipartite}, where  $B = Z_1Z_2^T$. 

{\it Proof of Lemma \ref{lem:bip1}.}
The generalized eigenvalue problem (\ref{eq:eigreg}) associated with the regularized matrix $A_\alpha$ is equivalent to the generalized SVD of the regularized biadjacency matrix $B_\alpha$:
\begin{equation*}
\left\{
    \begin{array}{l}
    B_\alpha x_2 = \sigma D_{\alpha,1} x_1  \\
    B_\alpha^Tx_1 = \sigma D_{\alpha, 2} x_2,
    \end{array}\right.
\end{equation*}
with  $\sigma = 1 - \lambda$.

In view of Proposition \ref{prop:bip_aggregate}, the singular value $\sigma = 0$ has multiplicity $n - K$, meaning that the eigenvalue $\lambda = 1$ has multiplicity $n-K$. Since the graph is connected, the eigenvalue 0 has multiplicity 1.
The proof then follows from the observation that if $(x_1,x_2)$ is a pair of  singular vectors for the  singular value $\sigma$, then the vectors $x=(x_1,\pm x_2)^T$ are eigenvectors for the eigenvalues $1-\sigma, 1+\sigma$. 

{\it Proof of Lemma \ref{lem:bip2}.}
By Proposition \ref{prop:bip_aggregate}, we can focus on the generalized singular value problem for the aggregate graph:
\begin{equation*}
\left\{
    \begin{array}{l}
    \bar B_\alpha y_2 = \sigma \bar D_{\alpha,1} y_1  \\
    \bar B_\alpha^Ty_1 = \sigma \bar D_{\alpha, 2} y_2,
    \end{array}\right.
\end{equation*}
Since
$$
\Bar{B}_{\alpha} = W_1(I_K + \alpha J_K)W_2,
$$
and
\begin{equation*}
\left\{
    \begin{array}{l}
    \Bar{D}_{\alpha, 1} = W_1(W_2 + \alpha n I),\\
    \Bar{D}_{\alpha, 2} = W_2(W_1 + \alpha m I),
    \end{array}\right.
\end{equation*}
we have:
\begin{equation*}
\left\{
    \begin{array}{l}
    W_1(I_K + \alpha J_K)W_2y_2 = W_1(W_2 + \alpha n I)y_1 \sigma,\\
    W_2(I_K + \alpha J_K)W_1y_1 = W_2(W_1 + \alpha m I) y_2 \sigma.
    \end{array}\right.
\end{equation*}

Observing that $J_KW_1y_1 \propto 1_K$ and $J_KW_2y_2 \propto 1_K$, we get:
\begin{equation*}
\left\{
    \begin{array}{l}
    (W_2 + \alpha m I_K)y_1\sigma - W_2y_2 \propto 1_K,\\
    (W_1 + \alpha n I_K)y_2\sigma - W_1y_1 \propto 1_K.
    \end{array}\right.
\end{equation*}

As two diagonal matrices commute, we obtain:
\begin{equation*}
\left\{
    \begin{array}{l}
    (W_1 + \alpha n I_K)(W_2 + \alpha m I_K)y_1\sigma - W_1W_2y_1 = \big(\eta_1(W_1 + \alpha n I_K) + \eta_2 W_2\big)1_K,\\
    (W_1 + \alpha n I_K)(W_2 + \alpha m I_K)y_2\sigma - W_1W_2y_2 = \big(\eta_1W_1 + \eta_2(W_2 + \alpha m I_K)\big)1_K,
    \end{array}\right.
\end{equation*}
for some constants $\eta_1, \eta_2$, and 
\begin{equation*}
\left\{
    \begin{array}{l}
    y_{1,j} = \dfrac{\eta_1(n_j + \alpha n) + \eta_2m_j}{(n_j + \alpha n)(m_j + \alpha m)\sigma - n_jm_j} ,\\
    y_{2,j} = \dfrac{\eta_1n_j + \eta_2(m_j + \alpha m)}{(n_j + \alpha n)(m_j + \alpha m)\sigma - n_jm_j}.
    \end{array}\right.
\end{equation*}
Letting $s_1 = -{\rm sign}(\eta_1(n_j + \alpha n) + \eta_2m_j)$ and $s_2 = - {\rm sign}(\eta_1n_j + \eta_2(m_j + \alpha m))$, we get:
$$
{\rm sign}(y_{1,j}) = s_1   \quad \Longleftrightarrow \quad {\rm sign}(y_{2,j}) = s_2   \quad \Longleftrightarrow \quad
 \dfrac{n_jm_j}{(n_j + \alpha n)(m_j + \alpha m)} \ge \sigma = 1 - \lambda,
$$
and the result follows from the fact that $x_1 = Z_1 y_1$ and $x_2 = Z_2 y_2$.

{\it Proof of Lemma \ref{lem:bip3}.}
The proof is the same as that of Lemma \ref{lem:order_eig}, where the threshold values follow from Lemma \ref{lem:bip2}:
\begin{equation*}
    \mu_j = 1 - \dfrac{n_jm_j}{(n_j + \alpha n)(m_j + \alpha m)}.
\end{equation*}

{\it Proof of Theorem \ref{theo:svd}.}
Let $x$ be the $j$-th column of the matrix $X$, for some $j\in \{2,\ldots,k\}$. In view of Lemma \ref{lem:bip3}, this is the eigenvector associated with eigenvalue $\lambda_j \in (\mu_{j-1},\mu_j)$.
In view of Lemma \ref{lem:bip1}, all entries of $x$ corresponding to blocks of size  $n_1,n_2\ldots,n_{j-1}$ have the same sign, the other having the opposite sign.

\section{Experimental Results}
In this section, we present more extensive experimental results.

%\subsection{Influence of $\alpha$}
Tables \ref{tab:2} and \ref{tab:half} present results for the same experiment as in Table \ref{tab:wiki} but for different values of $K$, namely $K=2$ (bisection of the graph) and $K = K_{\rm truth}/2$ (half of the ground-truth value). As for $K=K_{\rm true}$, regularization generally improves clustering performance. However, the optimal value of $\alpha$ remains both dataset dependent and metric dependent.
Note that, for the NG and WS datasets, the clustering remains trivial in the case $K=2$, one cluster containing all the nodes, until a certain amount of regularization.

\begin{table}[t]
    \centering
    \caption{Impact of regularization on clustering performance. $K = 2$.}
  SBM
      \vspace{1mm}

  \begin{tabular}{l|M{1.1cm} M{1.1cm} M{1.1cm} M{1.1cm} M{1.1cm} M{1.1cm} M{1.1cm} M{1.1cm}}
\toprule
$\alpha$ &     H &     C &     V &   ARI &   AMI &   FMI &     Q &   NSD \\
\midrule
0   &  0.00 &  0.43 &  0.00 &  0.0 &  0.0 &  \textbf{0.10} &  0.00 &  0.01 \\
0.1 &  0.00 &  \textbf{0.47} &  0.00 &  0.0 &  0.0 &  \textbf{0.10} &  0.00 &  0.00 \\
1   &  \textbf{0.01} &  0.04 &  \textbf{0.01} &  0.0 &  0.0 &  0.07 &  \textbf{0.34} &  \textbf{0.83} \\
10  &  \textbf{0.01} &  0.09 &  \textbf{0.01} &  0.0 &  0.0 &  0.09 &  0.13 &  0.22 \\
\bottomrule
\end{tabular}
        \vspace{2mm}

NG        
    \vspace{1mm}

    \begin{tabular}{l|M{1.1cm} M{1.1cm} M{1.1cm} M{1.1cm} M{1.1cm} M{1.1cm} M{1.1cm} M{1.1cm}}
\toprule
$\alpha$ &     H &     C &     V &   ARI &   AMI &   FMI &     Q &   NSD \\
\midrule
0   &  0.00 &  0.36 &  0.00 &  0.00 &  0.00 &  0.23 &  0.00 &  0.00 \\
0.1 &  0.00 &  0.36 &  0.00 &  0.00 &  0.00 &  0.23 &  0.00 &  0.00 \\
1   &  \textbf{0.15} &  \textbf{0.72} &  \textbf{0.25} &  \textbf{0.06} &  \textbf{0.25} &  \textbf{0.28} &  \textbf{0.16} &  \textbf{0.63} \\
10  &  0.12 &  0.61 &  0.20 &  0.04 &  0.20 &  0.26 &  0.13 &  0.51 \\
\bottomrule
\end{tabular}

        \vspace{2mm}

WS
    \vspace{1mm}

    \begin{tabular}{l|M{1.1cm} M{1.1cm} M{1.1cm} M{1.1cm} M{1.1cm} M{1.1cm} M{1.1cm} M{1.1cm}}
\toprule
$\alpha$ &     H &     C &     V &   ARI &   AMI &   FMI &     Q &   NSD \\
\midrule
0   &  0.00 &  \textbf{0.49} &  0.00 &  0.00 &  0.00 &  \textbf{0.39} &  0.00 &  0.00 \\
0.1 &  \textbf{0.07} &  0.42 &  \textbf{0.13} &  0.00 &  \textbf{0.12} &  0.34 &  0.09 &  \textbf{0.26} \\
1   &  0.03 &  0.27 &  0.05 & -0.01 &  0.05 &  0.35 &  0.09 &  0.13 \\
10  &  0.02 &  0.16 &  0.04 & -0.02 &  0.03 &  0.34 &  \textbf{0.10} &  0.16 \\
\bottomrule
\end{tabular}

    \label{tab:2}
\end{table}

\begin{table}[t]
    \centering
    \caption{Impact of regularization on clustering performance. $K = K_{\rm true} /2$.}
  SBM
      \vspace{1mm}

  \begin{tabular}{l|M{1.1cm} M{1.1cm} M{1.1cm} M{1.1cm} M{1.1cm} M{1.1cm} M{1.1cm} M{1.1cm}}
\toprule
$\alpha$ &     H &     C &     V &   ARI &   AMI &   FMI &     Q &   NSD \\
\midrule
0   &  0.08 &  0.21 &  0.11 &  0.0 &  0.00 &  \textbf{0.05} &  0.41 &  0.47 \\
0.1 &  0.20 &  0.27 &  0.23 &  0.0 &  \textbf{0.01} &  0.02 &  \textbf{0.55} &  0.84 \\
1   &  \textbf{0.24} &  \textbf{0.29} &  \textbf{0.26} &  0.0 &  0.00 &  0.02 &  0.54 &  \textbf{0.90} \\
10  &  0.19 &  0.28 &  0.23 &  0.0 &  0.00 &  0.03 &  0.40 &  0.70 \\
\bottomrule
\end{tabular}
        \vspace{2mm}

NG        
    \vspace{1mm}

    \begin{tabular}{l|M{1.1cm} M{1.1cm} M{1.1cm} M{1.1cm} M{1.1cm} M{1.1cm} M{1.1cm} M{1.1cm}}
\toprule
$\alpha$ &     H &     C &     V &   ARI &   AMI &   FMI &     Q &   NSD \\
\midrule
0   &  0.27 &  \textbf{0.76} &  0.40 &  0.11 &  0.39 &  0.31 &  0.20 &  0.41 \\
0.1 &  0.28 &  0.73 &  0.41 &  0.11 &  0.40 &  0.30 &  0.18 &  0.43 \\
1   &  \textbf{0.38} &  0.72 &  \textbf{0.50} &  \textbf{0.18} &  \textbf{0.50} &  \textbf{0.34} &  \textbf{0.21} &  \textbf{0.57} \\
10  &  0.31 &  0.62 &  0.42 &  0.11 &  0.42 &  0.27 &  0.17 &  0.51 \\
\bottomrule
\end{tabular}

        \vspace{2mm}

WS
    \vspace{1mm}

    \begin{tabular}{l|M{1.1cm} M{1.1cm} M{1.1cm} M{1.1cm} M{1.1cm} M{1.1cm} M{1.1cm} M{1.1cm}}
\toprule
$\alpha$ &     H &     C &     V &   ARI &   AMI &   FMI &     Q &   NSD \\
\midrule
0 &  0.23 &  \textbf{0.29} &  0.25 &  0.05 &  0.25 &  \textbf{0.26} &  0.25 &  0.49 \\
0.1 &  \textbf{0.26} &  \textbf{0.29} &  \textbf{0.28} &  \textbf{0.10} &  \textbf{0.27} &  \textbf{0.26} &  0.29 &  0.61 \\
1 &  0.23 &  0.24 &  0.23 &  0.04 &  0.23 &  0.20 &  \textbf{0.30} &  \textbf{0.65} \\
10 &  0.19 &  0.22 &  0.20 & -0.00 &  0.19 &  0.20 &  0.23 &  0.53 \\
\bottomrule
\end{tabular}

    \label{tab:half}
\end{table}

%\subsection{Regularization of Bipartite Graphs}
Table \ref{tab:uvb} presents the different scores for both types of regularization on the NG dataset. As we can see, preserving the bipartite structure of the graph leads to slightly better performance.

Finally, Table \ref{tab:noise_ng} shows the impact of regularization in the presence of noise for the NG dataset.  The conclusions are similar as for the WS dataset: regularization makes the spectral embedding much more robust to noise.

\begin{table}
    \centering
    \caption{Regularization of  the adjacency  vs.  biadjacency matrix on the NG dataset ($\alpha=1$).}
    \vspace{1mm}
    
$K = K_{\rm true}/2$
    \vspace{2mm}
    
\begin{tabular}{l|M{1.1cm} M{1.1cm} M{1.1cm} M{1.1cm} M{1.1cm} M{1.1cm} M{1.1cm} M{1.1cm}}
\toprule
{} &     H &     C &     V &   ARI &   AMI &   FMI &     Q &   std \\
\midrule
Adj. &  0.38 &  0.72 &  0.50 &  0.18 &  0.50 &  0.34 &  0.21 &  0.57 \\
Biadj. &  \textbf{0.41} &  0.72 &  \textbf{0.52} &  \textbf{0.19} &  \textbf{0.52} &  \textbf{0.35} &  0.21 &  \textbf{0.61} \\
\bottomrule
\end{tabular}

    \vspace{2mm}
    
$K = K_{\rm true}$
    \vspace{1mm}
    
\begin{tabular}{l|M{1.1cm} M{1.1cm} M{1.1cm} M{1.1cm} M{1.1cm} M{1.1cm} M{1.1cm} M{1.1cm}}
\toprule
{} &     H &     C &     V &   ARI &   AMI &   FMI &    Q &   std \\
\midrule
Adj. &  0.46 &  0.67 &  0.54 &  0.20 &  0.54 &  0.33 &  0.2 &  0.60 \\
Biadj. &  \textbf{0.47} &  \textbf{0.68} &  \textbf{0.56} &  \textbf{0.21} &  \textbf{0.55} &  \textbf{0.34} &  0.2 &  \textbf{0.61} \\
\bottomrule
\end{tabular}

    \label{tab:uvb}
\end{table}

\begin{table}[h]
    \centering
    \caption{Impact of noise on  clustering performance (NG dataset).}
    $
    \alpha = 0
    $
\vspace{5mm}
    \begin{tabular}{l|M{1.1cm} M{1.1cm} M{1.1cm} M{1.1cm} M{1.1cm} M{1.1cm} M{1.1cm} M{1.1cm}}
\toprule
noise &     H &     C &     V &   ARI &   AMI &   FMI &     Q &   std \\
\midrule
0 $\%$ &   0.40 &  {0.70} &  0.51 &  0.19 &  0.50 &  0.34 &  {0.21} &  0.55 \\
1 $\%$ &  0.00 &  1.00 & 0.00 &  0.00 & 0.00 &  0.23 & 0.00 &   0 \\
5 $\%$ &  0.14 &  0.65 &  0.23 &  0.06 &  0.23 &  0.27 &  0.13 &  0.30 \\
10 $\%$ &  0.00 &  0.36 &  0.01 & 0.00 &  0.00 &  0.23 &  0.00 &  0.00 \\
\bottomrule
\end{tabular}
$
    \alpha = 1
    $
\vspace{5mm}
\begin{tabular}{l|M{1.1cm} M{1.1cm} M{1.1cm} M{1.1cm} M{1.1cm} M{1.1cm} M{1.1cm} M{1.1cm}}
\toprule
noise &     H &     C &     V &   ARI &   AMI &  FMI &    Q &   std \\
\midrule
0 $\%$ &  {0.46} &  0.67 &  {0.54} &  0.20 &  {0.54} &  0.33 &  0.2 &  {0.60} \\
1 $\%$ &  0.48 &  0.66 &  0.56 &  0.21 &  0.56 &  0.33 &  0.2 &  0.64 \\
5 $\%$ &  0.49 &  0.66 &  0.56 &  0.23 &  0.56 &  0.34 &  0.2 &  0.66 \\
10 $\%$ &  0.45 &  0.66 &  0.54 &  0.20 &  0.54 &  0.33 &  0.2 &  0.59 \\
\bottomrule
\end{tabular}

    \label{tab:noise_ng}
\end{table}

\end{document}